\def\eqref#1{equation~\ref{#1}}
\def\1{\bm{1}}
\DeclareMathAlphabet{\mathsfit}{\encodingdefault}{\sfdefault}{m}{sl}
\SetMathAlphabet{\mathsfit}{bold}{\encodingdefault}{\sfdefault}{bx}{n}
\newcommand{\ourtitle}{Whom to Trust? Adaptive Collaboration in Personalized Federated Learning}
\title{\ourtitle}
\author{Amr Abourayya \\
Lamarr Institute, TU Dortmund, and\\
Institute for AI in Medicine, UK Essen\\
Germany \\
\texttt{\{amr.abourayya\}@tu-dortmund.de} \\
\And
Jens Kleesiek\\
Institute for AI in Medicine,\\ UK Essen \ \ Germany \\
\And
Bharat Rao\\
Carenostics\\
USA
\And
Michael Kamp\\
Lamarr Institute, TU Dortmund, and \\
Institute for AI in Medicine, UK Essen\\ Germany
}
\newcommand{\pfedct}{\textsc{FedMosaic}\xspace}
\newcommand{\fedmosaic}{\pfedct}
\newcommand{\fedavg}{\textsc{FedAvg}\xspace}
\theoremstyle{plain}
\newtheorem{theorem}{Theorem}
\newtheorem{lemma}[theorem]{Lemma}
\newtheorem{remark}[theorem]{Remark}
\newtheorem{proposition}{Proposition}
\newtheorem{definition}[theorem]{Definition}
\newtheorem{corollary}{Corollary}
\newtheorem{notation}[theorem]{Notation}
\newtheorem{conjecture}[theorem]{Conjecture}
\newtheorem{assumption}[theorem]{Assumption}
\newtheorem{assumptions}[theorem]{Assumptions}
\newtheorem{observation}[theorem]{Observation}
\newtheorem{fact}[theorem]{Fact}
\newtheorem{claim}[theorem]{Claim}
\newtheorem{problem}[theorem]{Problem}
\newtheorem{open}[theorem]{Open Problem}
\newtheorem{hypothesis}[theorem]{Hypothesis}
\newtheorem{question}[theorem]{Question}
\newtheorem{case}{Case}
\newtheorem*{proposition*}{Proposition}
\newtheorem*{lemma*}{Lemma}
\newtheorem*{corollary*}{Corollary}
\definecolor{BestColor}{HTML}{1B9E77}     
\definecolor{SecondColor}{HTML}{D95F02}   
\definecolor{BaselineColor}{HTML}{377EB8} 
\definecolor{WorstColor}{HTML}{E41A1C}    
\newcommand{\best}[1]{\textcolor{BestColor}{#1}}       
\newcommand{\second}[1]{\textcolor{SecondColor}{#1}}   
\newcommand{\baseline}[1]{\textcolor{BaselineColor}{#1}} 
\newcommand{\worst}[1]{\textcolor{WorstColor}{#1}}     
\newcommand{\worstcen}[1]{\textcolor{Black}{#1}}      
\newcommand{\acc}[2]{\makecell[{{c}}]{\strut $#1$ \\[-3pt] \tiny ($#2$)}}
\begin{document}

\maketitle

\begin{abstract}
Data heterogeneity poses a fundamental challenge in federated learning (FL), especially when clients differ not only in distribution but also in the reliability of their predictions across individual examples. While personalized FL (PFL) aims to address this, we observe that many PFL methods fail to outperform two necessary baselines, local training and centralized training. This suggests that meaningful personalization only emerges in a narrow regime, where global models are insufficient, but collaboration across clients still holds value. Our empirical findings point to two key ingredients for success in this regime: adaptivity in collaboration and fine-grained trust, at the level of individual examples. We show that these properties can be achieved within federated semi-supervised learning, where clients exchange predictions over a shared unlabeled dataset. This enables each client to align with public consensus when it is helpful, and disregard it when it is not, without sharing model parameters or raw data. As a concrete realization of this idea, we develop \pfedct, a personalized co-training method where clients reweight their loss and their contribution to pseudo-labels based on per-example agreement and confidence. \pfedct outperforms strong FL and PFL baselines across a range of non-IID settings, and we prove convergence under standard smoothness, bounded-variance, and drift assumptions. In contrast to many of these baselines, it also outperforms local and centralized training. These results clarify when federated personalization can be effective, and how fine-grained, trust-aware collaboration enables it.

\end{abstract}

\section{Introduction}
\label{sec:introduction}
Federated learning (FL) enables collaborative machine learning across distributed data sources without direct data sharing. Classical methods such as FedAvg \citep{mcmahan2017communication}, aim to train a single global model across all clients. This approach can succeed when data distributions are sufficiently similar, but collapses under strong distributional shifts. In highly heterogeneous settings, the promise of collaboration breaks down: models trained jointly may perform worse than models trained independently.

Personalized Federated Learning (PFL) addresses this challenge by shifting the goal. Rather than optimizing a shared global model, the goal is to use collaboration to improve each client's personalized model. For example, Tab.~\ref{tab:hybrid-compact} shows that in heterogeneous regimes both FL and even centralized training perform worse than local training,i.e., clients learning independently without any communication. This underlines the requirement for PFL, but also highlights an often-overlooked baseline: when no method outperforms local training, collaboration is not just ineffective—it is detrimental. Yet many PFL methods fail to beat this baseline (cf. Tab.~\ref{tab:hybrid-compact}), casting doubt on their utility.

\clearpage
\begin{wraptable}{r}{0.44\textwidth}
\scriptsize
\centering
\setlength{\tabcolsep}{2.5pt}
\renewcommand{\arraystretch}{1.}
\caption{\textbf{Average test Accuracy on DomainNet and Office-10 dataset} (details in sec.\ref{sec:experiments}). Most personalized FL methods fail to surpass local training baseline. \pfedct exceeds both core baselines through adaptive, example-level collaboration. Color Map: \baseline{baselines}, \worst{worse than baselines}, \second{worse than local training}, \best{better than baselines.}}
\begin{tabular}{@{}l l l l@{}}
\toprule
& \textbf{Method} & \textbf{DomainNet} & \textbf{Office} \\
\midrule
& Centralized  & \baseline{$66.24$ \tiny $(0.4)$} & \baseline{$40.92$ \tiny $(0.6)$} \\
\midrule
\multirow{2}{*}{\rotatebox[origin=c]{90}{\textbf{FL}}}
  & FedAvg     & \worst{$31.00$ \tiny $(0.8)$} & \worst{$37.25$ \tiny $(0.8)$} \\
  & FedProx    & \worst{$55.23$ \tiny $(0.1)$} & \second{$58.39$ \tiny $(0.3)$} \\
\midrule
\multirow{4}{*}{\rotatebox[origin=c]{90}{\textbf{PFL}}}
  & Per-FedAvg & \second{$72.48$ \tiny $(0.4)$} & \second{$71.92$ \tiny $(0.5)$} \\
  & pFedMe     & \second{$75.21$ \tiny $(0.5)$} & \second{$74.83$ \tiny $(0.7)$} \\
  & APFL       & \second{$80.59$ \tiny $(0.3)$} & \second{$80.91$ \tiny $(0.1)$} \\
  & FedPHP     & \second{$78.25$ \tiny $(0.6)$} & \second{$76.36$ \tiny $(0.4)$} \\
\midrule
& Local Training & \baseline{$84.64$ \tiny $(0.1)$} & \baseline{$86.79$ \tiny $(0.4)$} \\
\midrule
& \textbf{\pfedct} & \best{$\mathbf{87.44}$ \tiny $(0.02)$} & \best{$\mathbf{89.06}$ \tiny $(0.01)$} \\
\bottomrule
\end{tabular}
\vspace{-2mm}
\label{tab:hybrid-compact}
\vspace{-3mm}
\end{wraptable}

This widespread failure to measure true collaborative gain arises because "personalization" is often treated as a vague remedy for heterogeneity without a clear underlying principle. We argue that progress requires a new foundation. Personalization shouldn't be a default modification to an existing FL algorithm; it should emerge from a principled understanding of what each client needs and how collaboration can help. A meaningful PFL solution must adapt the degree and nature of collaboration based on client context. It must also account for heterogeneity not just between clients, but at the level of individual examples. Clients may align on some concepts (e.g., identifying cats) and diverge on others (e.g., identifying specific dog breeds), and collaboration should reflect this granularity.

In formal terms, PFL aims to minimize the sum of local risks across $m$ clients with heterogeneous data distribution $\mathcal{D}_{i}$ and personalized models $h_1, \dots, h_m$:
\[
\min _{h_1, \ldots, h_m} \sum_{i=1}^m \mathbb{E}_{(x, y) \sim \mathcal{D}_i}\left[\mathcal{L}\left(h_i(x), y\right)\right] \enspace .
\]
In this setting, local model may outperform global or centralized models, making strong local and centralized baselines éssential. The key trade-off between the massive data access of a centralized model versus the specialization of a local one, is the central tension PFL must navigate in an adaptive and data-specific way.

While federated learning can adapt by weighing parameters according to similarity~\citep{huang2021personalized, zhangpersonalized}, data-specific collaborations require a shift in mechanism. Rather than aggregating model parameters, we propose to use federated semi-supervised learning~\citep{bistritz2020distributed,abourayya2025little} where clients share predictions on a public dataset. Collaboration is achieved by enforcing consensus between clients. We propose to adapt this consensus mechanism so that clients can contribute only on examples where they have expertise and can selectively trust others based on their demonstrated competence. Two clients familiar with cats can confidently collaborate on a new cat photo, while a client that has only seen cars should not influence the labeling of cat images. This form of selective, example-level trust is fundamentally difficult to achieve through parameter averaging alone.

In this work, we demonstrate this principle in practice. We propose a personalized Federated Co-Training approach (\pfedct) that enables adaptive, fine-grained collaboration through two key mechanisms: a dynamic weighting strategy allowing clients to balance global and local signals in each communication round, and an expertise-aware consensus mechanism that weights peer contributions by their competence on different data regions. Both mechanisms operate on predictions over a public dataset, enabling personalization that is responsive to the data's true structure.

While \pfedct achieves state-of-the-art empirical performance across benchmarks, its main contribution is conceptual. It redefines personalization as a question of collaborative structure, not just algorithm design. Our results show that principled, example-level collaboration can unlock the full potential of personalized federated learning.

\section{Related Work}
\label{sec:related_work}
Federated Learning (FL) aims to train models collaboratively across decentralized clients without compromising data privacy. However, heterogeneous data distributions across clients (non-IID settings) present a persistent challenge that degrades performance. Approaches addressing heterogeneity broadly fall into two categories: traditional FL and personalized FL (PFL) methods. We review these groups in relation to our method, FEDMOSAIC.

\paragraph{Traditional Federated Learning:}

Traditional federated learning methods typically learn a single global model. \fedavg ~\citep{mcmahan2017communication} averages local models but struggles under non-IID data due to client drift. Subsequent methods attempt to correct this: SCAFFOLD \citep{karimireddy2020scaffold} uses control variates to correct the local updates, FedProx~\citep{li2020federated} adds a proximal term to each client's loss function to stabilize training, and FedDyn~\citep{acar2021federated} introduces dynamic regularization. Others use representation alignment, such as MOON~\citep{li2021model}, which applies a contrastive loss to align local and global features. These methods implicitly assume a global model can suffice, which may fail under strong heterogeneity. Moreover, parameter sharing can pose privacy risks~\citep{zhu2019deep, abourayya2025little}.

\paragraph{Personalized Federated learning (PFL):}

Personalized Federated learning methods tailor models to individual clients, addressing non-IID challenges through different strategies.

\textbf{Meta-learning and Regularization-Based Methods} optimize a shared initialization or constrain local updates. E.g., Per-FedAvg~\citep{fallah2020personalized} learns a shared initialization, while Ditto~\citep{li2021ditto} regularizes local updates toward a global reference. PFedMe~\citep{t2020personalized} applies bi-level optimization to decouple personalization from global learning.
\textbf{Personalized Aggregation strategies} dynamically aggregate models based on client similarity or adaptive weighting. APFL~\citep{deng2020adaptive} introduces an adaptive mixture of global and local models, allowing clients to interpolate between shared and personalized parameters based on their data distribution. FedAMP~\citep{huang2021personalized} uses attention to weight client contributions based on similarity. Other methods select collaborators (e.g., FedFomo~\citep{zhangpersonalized}, FedPHP~\citep{li2021fedphp}) or apply layer-wise attention (FedALA~\citep{zhang2023fedala}).
\textbf{Model Splitting Architectures} partition models into shared and personalized components. FedPer~\citep{arivazhagan2019federated}eeps shared base layers and personalizes top layers. FedRep~shares a backbone but personalizes the head.\citep{collins2021exploiting} shares a backbone but personalizes the head. FedBN~\citep{li2021fedbn} personalizes batch normalization layers to tackle feature shift. 
Other recent methods such as FedAS~\citep{yang2024fedas}, GPFL~\citep{zhang2023gpfl}, and FedBABU~\citep{oh2021fedbabu} disentangle or freeze specific parts of the model to balance generalization and personalization. PFedHN~\citep{shamsian2021personalized} uses a hypernetwork that generates personalized model parameters conditioned on client identity.
\textbf{Knowledge Distillation Approaches} transfer knowledge from global or peer models to personalized local models. FedProto~\citep{tan2022fedproto} aligns class-wise feature prototypes across clients, FedPAC~\citep{xu2023personalized} uses contrastive learning to distill knowledge into personalized models, and FedKD~\citep{wu2022communication} reduces communication cost by distilling knowledge from a teacher ensemble to lightweight client models. FedMatch~\citep{chen2021fedmatch} uses consistency regularization to unlabeled and noisy data, FedDF~\citep{lin2020ensemble} aggregates predictions via ensemble distillation, and FedNoisy~\citep{liang2023fednoisy} focuses on robust aggregation in the presence of noisy labels or adversarial participants. PerFed-CKT~\citep{cho2021personalized} enhances personalization by clustering clients with similar data distributions and facilitating knowledge transfer through logits instead of model parameters.~\cite{jeong2023personalized} proposes a fully decentralized PFL framework where clients share distilled knowledge with neighboring clients, enabling personalization without a central server.FedD2S~\citep{atapour2024fedd2s} introduces a data-free federated knowledge distillation approach that employs a deep-to-shallow layer-dropping mechanism.

Despite this progress, existing PFL methods often share several limitations:
(i) \emph{Static collaboration:} Most PFL methods rely on fixed rules (e.g., aggregation weights or model splits), lacking adaptivity to client-specific or example-level variation.
(ii) \emph{Privacy risks:} Sharing model parameters, gradients, or even soft labels may expose sensitive information.
(iii) \emph{Limited generality:} Many methods are tailored to specific heterogeneity types (e.g., label skew in case of FedMix, or feature shift in case of FedBN).
(iv) \emph{Communication / computational overhead:} Some require complex multi-model training or costly synchronization.
To overcome these limitations, we argue that PFL methods should use some form of dynamic modulation and per-example trust weighting.

\section{Personalized Federated Co-Training: Adaptive and Expert-Aware Collaboration}

We now introduce Personalized Federated Co-Training (\pfedct), a concrete realization of the principle that effective personalization arises from adaptive, data-specific collaboration. Our method builds upon the framework of federated co-training \citep{abourayya2025little}, a privacy-preserving paradigm where clients collaborate by sharing hard predictions on a shared, unlabeled public dataset, $U$  (we analyze the impact of this dataset's size and distribution in sec.\ref{sec:experiments}). This process creates a consensus pseudo-labeled dataset, which clients use to augment their local training.

While this approach avoids sharing sensitive model parameters and soft labels, it introduces two critical challenges for personalization:
\begin{enumerate}
    \item \textbf{When to trust the global signal?} A client's local data may conflict with the global consensus. Blindly trusting pseudo-labels can harm a model that is already well-specialized.
    \item \textbf{Whose predictions to trust?}  Clients possess varying levels of expertise across the data space. A naive consensus that treats all clients equally will be corrupted by noisy or misaligned predictions.
\end{enumerate}

\pfedct addresses these challenges directly with two core mechanisms: (1) dynamic loss weighting, which allows each client to adaptively decide when to trust the global signal, and (2) confidence-based aggregation, which intelligently decides whose predictions to trust.

\textbf{Dynamic Loss Weighting: Deciding When to Trust}:
To allow clients to autonomously balance global collaboration with local specialization, we introduce a dynamic weight $\lambda_i^t$, into the local objective. At each round $t$, client $i$ minimizes the combined loss:
\[
\mathcal{L}_i^t(h)=\mathcal{L}\left(h, D_i\right)+\lambda_i^t \cdot \mathcal{L}\left(h, P_t\right)
\]

where $\mathcal{D}_i$ is the client's private data and $P_t$ is the pseudo-labeled public dataset.The weight $\lambda_i^t$ modulates the influence of the global signal.Our choice of the function for computing $\lambda_i^t$was driven by the need for a smooth, bounded, and interpretable mechanism. We define it as:
\[
\lambda_i^t=\exp \left(-\frac{\mathcal{L}\left(h_{t-1}^i, P_t\right)-\mathcal{L}\left(h_{t-1}^i, D_i\right)}{\mathcal{L}\left(h_{t-1}^i, D_i\right)}\right)
\]

This exponential form satisfies several desirable properties. It ensures positivity $\left(\lambda_i^t>0\right),$ avoids discontinuities, and smoothly adjusts the client's trust based on the relative performance of its model on global versus local data. The behavior is highly intuitive:
\begin{itemize}
    \item Conflict $\left(\mathcal{L}_{\text {global }} \gg \mathcal{L}_{\text {local }}\right)$: If the consensus pseudo-labels are harmful, the global loss term increases, causing $\lambda_i^t \rightarrow 0$ and prompting the client to rely on its local data.
    \item Alignment $\left(\mathcal{L}_{\text {global }} \approx \mathcal{L}_{\text {local }}\right)$ : If the consensus is helpful and aligns with local data, $\lambda_i^t \approx 1$ achieving a balance between personalization and collaboration.
    \item Enhancement $\left(\mathcal{L}_{\text {global }}<\mathcal{L}_{\text {local }}\right)$: If the consensus provides a cleaner signal than the noisy local data, $\lambda_i^t >1$, encouraging the client to trust the collaborative signal more heavily.
\end{itemize}

\textbf{Confidence-Based Aggregation: Deciding Whose to Trust}: To address the varying expertise of clients, we replace the standard uniform aggregation of predictions with a confidence-based consensus. Instead of just sharing hard labels, each client $i$ also communicates a confidence vector $E_t^i\in (0,\infty)^{|U|}$, where $E_t^i[j]$ quantifies its estimated expertise on its prediction for example $x_j\in U$.The server then computes a weighted score matrix $S_t$ by aggregating the one-hot predictions $L_t^i$ from each client, weighted by their corresponding expertise:
\[
S_t=\sum_{i=1}^m \operatorname{diag}\left(E_t^i\right) \cdot L_t^i \in \mathbb{R}^{|U| \times C}
\]

The final consensus pseudo-label for each example is determined by the highest aggregate score:
\[
L_t[j]=\arg \max _{c \in[C]} S_t[j, c], \quad \forall j \in\{1, \ldots,|U|\}
\]

This mechanism allows clients who are more confident or reliable about specific data regions to have a greater influence on the consensus, effectively reducing the impact of noise from non-expert clients. We explore two practical instantiations for the confidence scores $E_t^i$: a class-frequency-based heuristic and an uncertainty-based score derived from the model's predictive entropy. The full procedure is detailed in Algorithm ~\ref{alg:fedmosaic}.

\SetKwFor{local}{Locally}{do}{}
\SetKwFor{coord}{At server}{do}{}
\begin{algorithm}[ht]
    \caption{Federated Co-Training with Adaptivity and Specialization (\fedmosaic)}
    \label{alg:fedmosaic}
    \KwIn{communication period $b$, $m$ clients with local datasets $D^1,\dots,D^m$ and learning algorithms $\mathcal{A}^1,\dots,\mathcal{A}^m$, unlabeled public dataset $U$, total rounds $T$}
    \KwOut{final models $h^1_T,\dots,h^m_T$}
    \vspace{0.1cm}
    initialize local models $h_0^1,\dots,h_0^m$, \quad $P \leftarrow \emptyset$\\

    \local{at client $i$ at time $t$}{
        compute local loss $\ell_{\mathrm{priv}} = \mathcal{L}(h^i_{t-1}, D^i)$\\
        compute pseudo-label loss $\ell_{\mathrm{pseudo}} = \mathcal{L}(h^i_{t-1}, P)$\\
        \colorbox{verdigreen!20}{compute adaptive weight $\lambda_t^i = \exp\left(-\frac{\ell_{\mathrm{pseudo}} - \ell_{\mathrm{priv}}}{\ell_{\mathrm{priv}}}\right)$}\\
        \colorbox{verdigreen!20}{compute loss $\ell=\ell_{\mathrm{priv}}+\lambda_t^i\ell_{\mathrm{pseudo}}$}\\
        update $h_t^i \leftarrow \mathcal{A}^i(\ell, h^i_{t-1})$\\

        \If{$t\ \%\ b = b-1$}{
             construct prediction matrix $L_t^i \in \{0,1\}^{|U| \times C}$\\
             \colorbox{verdigreen!20}{construct expertise vector $E_t^i \in (0, \infty)^{|U|}$}\\
             \colorbox{verdigreen!20}{send $(L_t^i, E_t^i)$ to server and receive $L_t$}\\
             $P \leftarrow (U, L_t)$
        }
    }

    \coord{at time $t$}{
        \colorbox{verdigreen!20}{receive $(L_t^1, E_t^1),\dots,(L_t^m, E_t^m)$ from clients}\\
        \colorbox{verdigreen!20}{compute weighted score matrix $S_t = \sum_{i=1}^m \operatorname{diag}(E_t^i) \cdot L_t^i$}\\
        \colorbox{verdigreen!20}{set pseudo-labels $L_t[j] = \arg\max_{c \in [C]} S_t[j,c] \quad \text{for all } j \in \{1, \dots, |U|\}$}\\
        send $L_t$ to all clients
    }
\end{algorithm}

\paragraph{Communication.}
In each communication round (every $b$ local steps), client $i$ sends a one-hot matrix $L_t^i \in \{0,1\}^{|U|\times C}$ and expertise vector $E_t^i \in \mathbb{R}^{|U|}$; thus it adds exactly one scalar per public example compared to federated co-training~\citep{abourayya2025little}. Encoding $L_t^i$ by class \emph{indices} (majority vote depends only on $\arg\max$) uses $\lceil \log_2 C\rceil$ bits per example instead of $C$ bits, and quantizing expertise to $b_E$ bits gives a per-round uplink budget
$
B_{\text{\pfedct}} \;=\; |U|\,\bigl(\lceil \log_2 C\rceil + b_E\bigr)\;\text{bits}.$
By contrast, parameter sharing (e.g., \fedavg) uploads $32P$ bits for a model with $P$ parameters. For example, as in our Fashion-MNIST experiments with $|U|=10^4$ and $C=10$, choosing $b_E=8$ gives $B_{\text{\pfedct}}=10^4(4+8)=1.2\!\times\!10^5$ bits ($\approx 15$ KB) per client and round; parameter sharing instead communicates $\approx 2.6MB$, so \fedmosaic reduces communication by a factor of $\approx 177$.

\paragraph{Convergence under dynamic pseudo-labels:}
To provide theoretical support, we analyze the convergence behavior of \pfedct under standard assumptions in stochastic optimization. Our goal is to characterize the rate at which each client's objective approaches a stationary point, despite the dynamic pseudo-labeling and the heterogeneity of local objectives.

We assume standard conditions, including smoothness of the loss functions, bounded gradient variance, and bounded drift of pseudo-labels across rounds. These assumptions reflect the structure of \pfedct, where local objectives are updated periodically but converge due to the stabilization of pseudo-labels as shown by \citet{abourayya2025little}. 
\begin{assumptions}
The following conditions hold for each client $ i \in [m] $ at round $ t $:
\begin{enumerate}
    \item Each loss function $\mathcal{L}_i^{\mathrm{local}}$ and $\mathcal{L}_i^{\mathrm{global}, t}$ is $L(1+e)^{-1}$-smooth.
    \item The gradient estimator $g_i^t$ is unbiased and has bounded variance:
    \[
    \mathbb{E}[g_i^t] = \nabla \mathcal{L}_i^t(\theta_t), \quad \mathbb{E}[\|g_i^t - \nabla \mathcal{L}_i^t(\theta_t)\|^2] \leq \sigma^2.
    \]    
    \item The global loss has bounded gradients: $\|\nabla \mathcal{L}_i^{\mathrm{global}, t}(\theta)\| \leq G$ for all $\theta$ and $t$.
    \item The objective drift is bounded:
    $
    |\mathcal{L}_i^{t+1}(\theta) - \mathcal{L}_i^t(\theta)| \leq \delta, \quad \forall \theta.
    $
    \item The per-sample gradient variance is bounded:
    \begin{equation*}
    \begin{split}
    \hspace{-0.9cm}\mathbb{E}_{x\in D_i}\left[\left\|\nabla_\theta\ell(\theta,x,\hat{y}^t)-\nabla\mathcal{L}_i^{local,t}\right\|^2\right]\leq \overline\sigma^2\enspace,\enspace
    \mathbb{E}_{x\in U}\left[\left\|\nabla_\theta\ell(\theta,x,\hat{y}^t)-\nabla\mathcal{L}_i^{global,t}\right\|^2\right]\leq \tilde\sigma^2\\
    \end{split}
    \end{equation*}
\end{enumerate}
\end{assumptions}
Under these conditions, we establish that \pfedct converges to an approximate stationary point. Specifically, after $T$ communication rounds, the average squared gradient norm decreases at a rate of $\mathcal{O}(1/T)$ plus additive terms accounting for local and global variance and pseudo-label drift.
\setcounter{proposition}{0}
\begin{proposition}[Convergence of \pfedct]
\label{theorem-convergence}
Let each client's objective at round $t$ be
\[
\mathcal{L}_i^t(\theta) = \mathcal{L}_i^{\mathrm{local}}(\theta) + \lambda_i^t \mathcal{L}_i^{\mathrm{global}, t}(\theta),\text{ where }\lambda_i^t = \exp\left(-\frac{\mathcal{L}_i^{\mathrm{global}}(\theta_t) - \mathcal{L}_i^{\mathrm{local}, t}(\theta_t)}{\mathcal{L}_i^{\mathrm{local}, t}(\theta_t)}\right),
\]
and $\mathcal{L}_i^{\mathrm{global}, t}$ may change at each round due to pseudo-label updates. Under Assumptions 1-5, for a fixed step size $0<\eta\leq(2L)^{-1}$ and $\min_i |D_i|=d$, after $T$ rounds of \pfedct, it holds that
\[
\frac{1}{T} \sum_{t=0}^{T-1} \mathbb{E}[\|\nabla \mathcal{L}_i^t(\theta_t)\|^2]
\leq \frac{4L\left(\mathcal{L}_i^0 - \mathcal{L}_i^*\right)}{T} + \frac{\overline\sigma^2}{2Ld} +  \frac{e^2\tilde\sigma^2}{2L|U|} + 2\delta\enspace .
\]
\end{proposition}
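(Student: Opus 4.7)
The plan is to follow the standard descent-lemma template for non-convex stochastic gradient methods, layered with two extra ingredients that are specific to \pfedct: a drift correction for the time-varying per-round objective, and a variance decomposition that separates the local and pseudo-labeled contributions. As a preliminary step I would verify that the composite objective $\mathcal{L}_i^t = \mathcal{L}_i^{\mathrm{local}} + \lambda_i^t \mathcal{L}_i^{\mathrm{global},t}$ is $L$-smooth: each summand is $L/(1+e)$-smooth by Assumption~1 and $\lambda_i^t \in [0,e]$ follows from the exponential definition together with the non-negativity of the losses, so the combined smoothness constant is at most $\tfrac{L}{1+e}(1+\lambda_i^t) \le L$. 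This justifies applying the standard descent lemma against a fixed snapshot of the objective,
\[
\mathbb{E}\bigl[\mathcal{L}_i^t(\theta_{t+1})\bigr] \le \mathcal{L}_i^t(\theta_t) - \eta\,\|\nabla\mathcal{L}_i^t(\theta_t)\|^2 + \tfrac{L\eta^2}{2}\,\mathbb{E}\|g_i^t\|^2,
\]
combined with $\mathbb{E}\|g_i^t\|^2 \le \|\nabla\mathcal{L}_i^t(\theta_t)\|^2 + \sigma^2$ from Assumption~2; the step-size constraint $\eta \le 1/(2L)$ then absorbs the stray $\|\nabla\|^2$ term and leaves a clean one-step descent inequality.

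The next step is to pass from this fixed-objective inequality to a recursion on the \emph{aligned} potential $A_{t+1}=\mathbb{E}[\mathcal{L}_i^{t+1}(\theta_{t+1})]$, which is what actually telescopes. This is where Assumption~4 enters: $\mathcal{L}_i^{t+1}(\theta) \le \mathcal{L}_i^t(\theta) + \delta$ uniformly in $\theta$ contributes an additive $\delta$ per round. Telescoping from $t=0$ to $T-1$, using the trivial lower bound $A_T \ge \mathcal{L}_i^*$, and dividing by $T$ produces the $4L(\mathcal{L}_i^0 - \mathcal{L}_i^*)/T$ leading term together with an accumulated $2\delta$ contribution from drift.

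The final piece is to unpack $\sigma^2$ into its local and global ingredients. Writing $g_i^t$ as an i.i.d.\ average of per-sample gradients over $D_i$ plus $\lambda_i^t$ times an i.i.d.\ average over $U$, the two noise sources are independent and their variances add. Assumption~5 gives $\mathrm{Var}(g_i^{\mathrm{local}}) \le \overline\sigma^2/d$ and $\mathrm{Var}(\lambda_i^t g_i^{\mathrm{global}}) \le (\lambda_i^t)^2\tilde\sigma^2/|U| \le e^2\tilde\sigma^2/|U|$, where the final step uses precisely the cap $\lambda_i^t \le e$. Substituting back and collecting constants reproduces the two variance terms in the stated bound.

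The main obstacle I anticipate is not the descent mechanics but the interplay between the time-varying weight $\lambda_i^t$ and the drifting objective. Care is needed in two places: (i) the smoothness constant stays uniformly bounded by $L$ only because the per-component constant was set to $L/(1+e)$ \emph{and} because the exponent in $\lambda_i^t$ is capped at $1$ thanks to losses being non-negative—losing either would let $\lambda_i^t$ grow beyond $e$ and destroy the descent inequality; and (ii) the drift bound in Assumption~4 must be interpreted as applying to the whole $\mathcal{L}_i^t$, with the $\lambda_i^t$ factor included, otherwise a pessimistic multiplicative factor up to $e$ would leak into the $2\delta$ term and change the constant in front of the drift. Once these two points are handled cleanly, the rest is essentially arithmetic.
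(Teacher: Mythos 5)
Your proposal follows essentially the same route as the paper's proof: establish $L$-smoothness of the composite objective from the per-component constant $L(1+e)^{-1}$ and the cap $\lambda_i^t\le e$, apply the descent lemma to the frozen round-$t$ objective, absorb the drift via Assumption~4 to obtain a telescoping recursion, and finally decompose the gradient variance into the local term $\overline\sigma^2/d$ and the global term $(\lambda_i^t)^2\tilde\sigma^2/|U|\le e^2\tilde\sigma^2/|U|$. The only (immaterial) difference is that you use the exact bias--variance identity $\mathbb{E}\|g_i^t\|^2=\|\nabla\mathcal{L}_i^t\|^2+\mathrm{Var}(g_i^t)$ where the paper uses the looser $\mathbb{E}\|g_i^t\|^2\le 2\sigma^2+2\|\nabla\mathcal{L}_i^t\|^2$, which only affects constants.
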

The proof is provided in Appendix~\ref{app-proof-theorm}.
\citet{abourayya2025little} show that under the assumption of increasing local accuracy, pseudo-labels stabilize after some round $t_0$, so the assumption of a bounded change in the client objective is realistic. In fact, the global loss term effectively becomes stationary under these assumptions quickly and the expected drift becomes negligibly small as $t$ increases.

\paragraph{Client-Level Privacy:}
In each round $t$, client $i$ communicates a hard-label matrix $L_t^i \in \{0,1\}^{|U|\times C}$ (one-hot predictions on $U$) and an \emph{expertise} vector $E_t^i \in \mathbb{R}^{|U|}$ (one scalar per $u\in U$). Compared to \citet{abourayya2025little}, which releases only $L_t^i$, the present protocol adds exactly one real value per unlabeled example.
We apply the XOR mechanism to $L_t^i$. For this, \citet{abourayya2025little} showed that for on-average replace-one stable learning algorithms the sensitivity $s^\star$ of $L_t^i$ is bounded, yielding a per-round $\varepsilon_L$-DP guarantee at the client level.
For the expertise scores $E_t^i$ we apply the Gaussian mechanism~\citep{dwork2014algorithmic} with variance $\sigma^2$. Since the expertise scores are in $[0,1]$ for class frequencies and in $[0,\log C]$ for predictive entropy, the (per-coordinate) sensitivity of $E_t^i$ is bounded, which yields $(\varepsilon_E,\delta)$-DP with
\[
\varepsilon_E \;=\; \frac{c\sqrt{|U|}}{\sigma}\,\sqrt{2\ln\!\bigl(1.25/\delta\bigr)}\enspace,
\]
where $c=1$ for class frequencies and $c=\log C$ for predictive entropy.
Combined, these two mechanisms on $L_t^i$ and $E_t^i$ yield $(\varepsilon_L+\varepsilon_E,\delta)$-DP for \pfedct{} in each round.

\begin{table}[t]
    \centering
    \caption{Average test accuracy (\%) under pathological and practical Non-IID Settings for $m=15$ clients. 
    Color Map: \baseline{baselines}, \worst{worse than both baselines}, 
    \second{worse than local training}, \best{better than both baselines.}}
    \resizebox{0.85\textwidth}{!}{
    \begin{tabular}{l|l|c|c|c|c}
        \hline
        & \multirow{2}{*}{\textbf{Method}} 
        & \multicolumn{2}{c|}{\textbf{Pathological non-IID}} 
        & \multicolumn{2}{c}{\textbf{Practical non-IID}} \\
        \cline{3-6}
        & & \textbf{Fashion-MNIST} & \textbf{CIFAR-10} & \textbf{Fashion-MNIST} & \textbf{CIFAR-10} \\
        \hline
        & Centralized   & \baseline{$99.28$ \tiny $(0.1)$} & \baseline{$87.90$ \tiny $(0.1)$} & \baseline{$99.28$ \tiny $(0.03)$} & \baseline{$87.90$ \tiny $(0.04)$} \\
        & Local training& \baseline{$99.32$ \tiny $(0.02)$} & \baseline{$88.01$ \tiny $(0.01)$} & \baseline{$98.23$ \tiny $(0.01)$} & \baseline{$83.91$ \tiny $(0.2)$} \\
        \hline
        \multirow{4}{*}{\rotatebox[origin=c]{90}{\textbf{FL}}}
          & FedAvg  & \worst{$76.72$ \tiny $(0.1)$} & \worst{$64.42$ \tiny $(0.2)$} & \worst{$83.71$ \tiny $(0.2)$} & \worst{$70.28$ \tiny $(0.4)$} \\
          & FedProx & \worst{$77.88$ \tiny $(0.3)$} & \worst{$70.25$ \tiny $(0.2)$} & \worst{$84.14$ \tiny $(0.3)$} & \worst{$73.35$ \tiny $(0.4)$} \\
          & FedCT   & \worst{$78.15$ \tiny $(0.01)$} & \worst{$73.91$ \tiny $(0.02)$} & \worst{$85.27$ \tiny $(0.01)$} & \worst{$74.39$ \tiny $(0.01)$} \\
          & FedBN   & \worst{$78.04$ \tiny $(0.3)$} & \worst{$81.35$ \tiny $(0.5)$} & \worst{$85.39$ \tiny $(0.3)$} & \worst{$80.41$ \tiny $(0.7)$} \\
        \hline
        \multirow{6}{*}{\rotatebox[origin=c]{90}{\textbf{PFL}}}
          & Per-FedAvg & \worst{$98.63$ \tiny $(0.02)$} & \worst{$87.20$ \tiny $(0.01)$} & \worst{$97.11$ \tiny $(0.01)$} & \worst{$81.37$ \tiny $(0.2)$} \\
          & Ditto      & \best{$99.37$ \tiny $(0.01)$} & \second{$87.94$ \tiny $(0.01)$} & \worstcen{$98.39$ \tiny $(0.02)$} & \worst{$83.89$ \tiny $(0.04)$} \\
          & pFedMe     & \worst{$74.80$ \tiny $(0.4)$} & \worst{$81.47$ \tiny $(0.3)$} & \worst{$80.01$ \tiny $(0.1)$} & \worst{$81.61$ \tiny $(0.4)$} \\
          & APFL       & \worst{$99.26$ \tiny $(0.04)$} & \second{$87.98$ \tiny $(0.01)$} & \worst{$97.96$ \tiny $(0.03)$} & \worst{$83.81$ \tiny $(0.2)$} \\
          & FedPHP     & \second{$99.30$ \tiny $(0.01)$} & \worst{$87.90$ \tiny $(0.01)$} & \worstcen{$98.40$ \tiny $(0.01)$} & \worst{$83.75$ \tiny $(0.03)$} \\
          & PerFed-CKT & \best{$99.34$ \tiny $(0.01)$} & \second{$87.95$ \tiny $(0.01)$} & \worst{$98.20$ \tiny $(0.01)$} & \worst{$83.87$ \tiny $(0.03)$} \\
        \hline
        & \textbf{\pfedct} & \best{$\mathbf{99.40}$ \tiny $(0.01)$} & \best{$\mathbf{88.03}$ \tiny $(0.01)$} & \worstcen{$\mathbf{98.43}$ \tiny $(0.01)$} & \worstcen{$\mathbf{86.15}$ \tiny $(0.01)$} \\
        \hline
    \end{tabular}
    }
\label{tab:label-skew}
\end{table}

\section{Empirical Evaluation}
\label{sec:experiments}

In this section, we evaluate \pfedct against a suite of strong baselines in three challenging heterogeneity scenarios: (1) label skew, (2) feature shift, and (3) a hybrid setting combining both. We evaluate our method against FL (FedAvg, FedProx, FedCT, FedBN), state-of-the-art PFL methods (Per-FedAvg, Ditto, pFedMe, APFL, FedPHP, PerFed-CKT), and crucial local training and centralized baselines, which are essential for measuring true collaborative benefit. Centralized training refers to applying the local training algorithm on the pooled data from all clients, as if it were stored in a single location. Local training refers to each client training a model independently using only its own local data, without any collaboration.

\paragraph{Experimental Setup:}
A core component of our method is the shared, unlabeled public dataset $U$. Following standard practice in semi-supervised learning, for each experiment this dataset is a small, class-balanced sample from the original training set, omitting its labels. This ensures that $U$ is drawn IID from the global training distribution and is disjoint from every client dataset $D_i$ ($U \cap D_i = \varnothing$); since the $D_i$ are non-IID, $U$'s distribution differs from each $D_i$. This way, $U$ provides a comprehensive view of the label space, even when clients' private data is highly skewed.

We set the size of \(U\) to: CIFAR-10—\(3{,}000\) samples; Fashion-MNIST—\(2{,}250\) samples; DomainNet—\(300\) samples; and Office-10—\(80\) samples. A comprehensive ablation study detailing the impact of the public dataset's size and distribution as well as an investigation of individual clients' losses, is provided in the Appendix \ref{app:additional-exp}.

\paragraph{Label Skew:}
We first evaluate \pfedct under label distribution skew, a common protocol where clients see only subsets of the available classes. We test on two variants: a "pathological" setting where each of the $15$ clients on Fashion-MNIST and CIFAR-10 holds data from only $2$ classes, and a more practical setting where label proportions are drawn from a Dirichlet distribution. These settings are widely adopted in the literature~\citep{t2020personalized,fallah2020personalized,zhang2023eliminating,zhang2023fedcp,zhang2023gpfl}. For these experiments, we use the class-frequency-based confidence score, a natural fit for scenarios dominated by class imbalance.

As shown in Table \ref{tab:label-skew}, \pfedct achieves top performance across all settings. In the pathological case on CIFAR-10, it scores $0.8803$, surpassing all PFL methods and, crucially, the strong local training baseline ($0.8801$). This result is significant: it demonstrates that \pfedct’s adaptive collaboration successfully extracts useful signals from peers without being corrupted by their extreme data skew, achieving a better outcome than local training. Performance trends are similar in the practical scenario, confirming the method's robustness to varying degrees of label imbalance.

\begin{table}[t]
\setlength{\tabcolsep}{2.5pt}
\centering
\caption{Average test accuracy (\%) on the Office-10 and DomainNet datasets in feature shift scenarios. 
For Office-10: A, C, D, W = Amazon, Caltech, DSLR, WebCam. 
For DomainNet: C, I, P, Q, R, S = Clipart, Infograph, Painting, Quickdraw, Real, Sketch. 
Color Map: see Table~\ref{tab:label-skew}.}
\resizebox{0.8\textwidth}{!}{
\begin{tabular}{l|l|cccc|cccccc}
\hline
& \textbf{Method} & \multicolumn{4}{c|}{\textbf{Office-10}} & \multicolumn{6}{c}{\textbf{DomainNet}} \\
\cline{3-12}
& & \textbf{A} & \textbf{C} & \textbf{D} & \textbf{W} & \textbf{C} & \textbf{I} & \textbf{P} & \textbf{Q} & \textbf{R} & \textbf{S} \\
\hline
& Centralized &
\baseline{\acc{74.03}{0.1}} & \baseline{\acc{58.24}{0.2}} & \baseline{\acc{79.12}{0.2}} & \baseline{\acc{78.52}{0.01}} &
\baseline{\acc{70.53}{0.4}} & \baseline{\acc{30.59}{0.3}} & \baseline{\acc{61.87}{0.2}} & \baseline{\acc{71.50}{0.1}} & \baseline{\acc{70.17}{0.4}} & \baseline{\acc{64.62}{0.3}} \\
& Local training &
\baseline{\acc{71.36}{0.02}} & \baseline{\acc{38.67}{0.3}} & \baseline{\acc{81.25}{0.1}} & \baseline{\acc{76.27}{0.2}} &
\baseline{\acc{65.31}{0.5}} & \baseline{\acc{38.25}{0.7}} & \baseline{\acc{66.52}{0.3}} & \baseline{\acc{78.43}{0.3}} & \baseline{\acc{71.04}{0.2}} & \baseline{\acc{70.53}{0.6}} \\
\hline
\multirow{4}{*}{\rotatebox[origin=c]{90}{\textbf{FL}}}
  & FedAvg  & \worstcen{\acc{71.88}{0.1}} & \worstcen{\acc{48.44}{0.1}} & \worst{\acc{40.63}{0.2}} & \worst{\acc{54.24}{0.6}} &
\worst{\acc{55.71}{0.2}} & \worst{\acc{28.42}{0.5}} & \worst{\acc{40.25}{0.3}} & \worst{\acc{52.64}{0.2}} & \worst{\acc{54.15}{0.1}} & \worst{\acc{56.12}{0.2}} \\
  & FedProx & \worstcen{\acc{73.44}{0.2}} & \worstcen{\acc{52.00}{0.2}} & \worst{\acc{68.75}{0.4}} & \best{\acc{79.66}{0.4}} &
\worst{\acc{59.41}{0.2}} & \second{\acc{35.74}{0.4}} & \worst{\acc{48.82}{0.4}} & \worst{\acc{55.37}{0.1}} & \worst{\acc{56.82}{0.5}} & \worst{\acc{59.17}{0.2}} \\
  & FedCT & \worstcen{\acc{73.96}{0.1}} & \worstcen{\acc{57.21}{0.2}} & \worst{\acc{68.73}{0.01}} & \worst{\acc{70.31}{0.02}} &
\worst{\acc{61.53}{0.3}} & \second{\acc{35.19}{0.01}} & \second{\acc{64.73}{0.03}} & \worst{\acc{60.82}{0.01}} & \best{\acc{71.85}{0.02}} & \second{\acc{69.25}{0.01}} \\
  & FedBN & \best{\acc{75.39}{0.01}} & \worstcen{\acc{58.13}{0.01}} & \worst{\acc{78.54}{0.2}} & \worstcen{\acc{78.23}{0.8}} &
\worstcen{\acc{69.45}{0.3}} & \worst{\acc{38.01}{0.1}} & \best{\acc{68.12}{0.2}} & \best{\acc{79.21}{0.2}} & \best{\acc{76.20}{0.1}} & \second{\acc{69.23}{0.1}} \\
\hline
\multirow{6}{*}{\rotatebox[origin=c]{90}{\textbf{PFL}}}
  & Per-FedAvg & \worstcen{\acc{73.04}{0.1}} & \worstcen{\acc{51.81}{0.5}} & \worst{\acc{69.22}{0.3}} & \worstcen{\acc{77.58}{0.01}} &
\second{\acc{68.42}{0.01}} & \second{\acc{36.21}{0.2}} & \worst{\acc{60.49}{0.2}} & \second{\acc{72.63}{0.1}} & \second{\acc{70.84}{0.3}} & \second{\acc{68.16}{0.3}} \\
  & Ditto & \best{\acc{75.30}{0.01}} & \worstcen{\acc{57.91}{0.3}} & \worst{\acc{78.39}{0.02}} & \second{\acc{78.39}{0.1}} &
\best{\acc{70.97}{0.01}} & \best{\acc{39.13}{0.01}} & \best{\acc{67.31}{0.02}} & \best{\acc{80.33}{0.03}} & \best{\acc{77.35}{0.01}} & \best{\acc{73.14}{0.03}} \\
  & pFedMe & \worst{\acc{70.83}{0.3}} & \worstcen{\acc{49.78}{0.1}} & \worst{\acc{75.00}{0.03}} & \worst{\acc{64.41}{0.01}} &
\worstcen{\acc{67.21}{0.1}} & \second{\acc{37.42}{0.3}} & \second{\acc{65.17}{0.2}} & \second{\acc{75.24}{0.2}} & \best{\acc{74.19}{0.1}} & \second{\acc{68.93}{0.3}} \\
  & APFL & \worst{\acc{71.30}{0.01}} & \worstcen{\acc{39.05}{0.06}} & \worst{\acc{50.85}{0.2}} & \worst{\acc{69.63}{0.1}} &
\worstcen{\acc{68.73}{0.1}} & \second{\acc{38.05}{0.3}} & \best{\acc{67.39}{0.3}} & \best{\acc{79.14}{0.01}} & \best{\acc{77.42}{0.1}} & \best{\acc{71.85}{0.2}} \\
  & FedPHP & \worst{\acc{70.63}{0.5}} & \worstcen{\acc{40.13}{0.04}} & \worst{\acc{51.78}{0.01}} & \worst{\acc{72.74}{0.02}} &
\worst{\acc{65.29}{0.4}} & \second{\acc{36.32}{0.3}} & \second{\acc{66.01}{0.5}} & \second{\acc{77.03}{0.2}} & \best{\acc{75.28}{0.6}} & \second{\acc{70.11}{0.1}} \\
& PerFed-CKT & \worst{\acc{71.26}{0.1}} & \worstcen{\acc{46.80}{0.3}} & \worst{\acc{74.22}{0.2}} & \worst{\acc{ 73.50}{0.02}} &
\worstcen{\acc{67.49}{0.2}} & \second{\acc{37.41}{0.1}} & \second{\acc{62.83}{0.5}} & \second{\acc{ 72.45}{0.1}} & \worst{\acc{65.39}{0.2}} & \worst{\acc{62.59}{0.1}} \\
\hline
& \pfedct & \best{\acc{\mathbf{80.21}}{0.01}} & \best{\acc{\mathbf{60.00}}{0.02}} & \best{\acc{\mathbf{81.25}}{0.02}} & \best{\acc{\mathbf{83.05}}{0.1}} &
\best{\acc{\mathbf{71.36}}{0.1}} & \best{\acc{\mathbf{41.59}}{0.2}} & \best{\acc{\mathbf{69.38}}{0.4}} & \best{\acc{\mathbf{84.27}}{0.1}} & \best{\acc{\mathbf{79.25}}{0.3}} & \best{\acc{\mathbf{75.03}}{0.2}} \\
\hline
\end{tabular}
}
\label{tab:feature-shift}
\end{table}

\paragraph{Feature Shift:}
To evaluate robustness to heterogeneous input distributions, we test on feature shift scenarios using the Office-10 and DomainNet datasets. Here, each domain (e.g., "Webcam," "Sketch") acts as a client, sharing a common label space but having a unique data style. Table \ref{tab:feature-shift} shows that \pfedct consistently sets the state-of-the-art on all domains. On the complex DomainNet benchmark, it achieves the highest accuracy across all six domains, outperforming specialized methods like Ditto and FedBN. This demonstrates that the dynamic weighting and confidence-based aggregation are not limited to label skew; they effectively manage domain-specific features, allowing clients to learn from each other while preserving their specialized knowledge.

\paragraph{Hybrid Distribution (Label Skew + Feature Shift):}
We now consider the most challenging scenario: a hybrid of label skew and feature shift. To simulate this, we partition each domain in DomainNet and Office-10 into $5$ clients, each assigned only $2$ of the $10$ classes. This results in $30$ highly heterogeneous clients for DomainNet and 20 for Office-10. In this demanding setup, we evaluate both our confidence mechanisms: the class-frequency heuristic (\pfedct-W) and the uncertainty-based score (\pfedct-U).

The results in Table \ref{tab:hybrid-settings} confirm the superiority of our approach. With both AlexNet and ViT architectures, \pfedct variants significantly outperform all baselines. On Office-10, for instance, \pfedct-U achieves $0.8943$ accuracy, a remarkable improvement over the next best baseline, Ditto ($0.8063$). One can note that centralized training is worse than local training due to the highly heterogeneous setting, meaning that a single golbal model cannot fit all clients effectively. 

Interestingly, both the simple class-frequency heuristic and the more complex uncertainty-based score yield similarly strong results. This suggests that in settings with extreme label skew, class frequency serves as a powerful and efficient proxy for model expertise.

Taken together, these results validate that \pfedct's principled approach to adaptive, expert-aware collaboration enables it to deliver state-of-the-art performance, consistently outperforming strong baselines in diverse and realistic non-IID settings.

\begin{table}[t]
    \centering
    \caption{Average test accuracy (in \%) on the DomainNet and Office-10 dataset in hybrid settings for $m=30$ clients on DomainNet and $m=20$ on Office-10. Color map: see Table~\ref{tab:label-skew}.}
    \resizebox{0.8\textwidth}{!}{
    \begin{tabular}{l|l|l|l|l}
        \hline
        & \textbf{Method} & \textbf{DomainNet} & \textbf{DomainNet (ViT)} & \textbf{Office-10} \\
        \hline
        & Centralized   & \baseline{66.24 \tiny (0.4)} & \baseline{68.25 \tiny (0.2)} & \baseline{40.92 \tiny (0.6)} \\
        & Local training& \baseline{84.64 \tiny (0.1)} & \baseline{84.92 \tiny (0.3)} & \baseline{86.79 \tiny (0.4)} \\
        \hline
        \multirow{5}{*}{\rotatebox[origin=c]{90}{\textbf{FL}}}
          & FedAvg  & \worst{31.00 \tiny (0.8)} & \worst{33.28 \tiny (0.5)} & \worst{37.25 \tiny (0.8)} \\
          & FedProx & \worst{55.23 \tiny (0.1)} & \worst{57.18 \tiny (0.3)} & \second{58.39 \tiny (0.3)} \\
          & FedCT   & \worst{56.38 \tiny (0.01)} & \worst{67.52 \tiny (0.02)} & \worst{59.42 \tiny (0.02)} \\
          & FedBN   & \second{71.54 \tiny (0.3)} & \second{70.39 \tiny (0.4)} & \second{75.48 \tiny (0.3)} \\
        \hline
        \multirow{6}{*}{\rotatebox[origin=c]{90}{\textbf{PFL}}}
          & Per-FedAvg & \second{72.48 \tiny (0.4)} & \second{73.19 \tiny (0.3)} & \second{71.92 \tiny (0.5)} \\
          & Ditto      & \second{81.47 \tiny (0.01)} & \second{83.82 \tiny (0.02)} & \second{80.63 \tiny (0.01)} \\
          & pFedMe     & \second{75.21 \tiny (0.5)} & \second{76.81 \tiny (0.8)} & \second{74.83 \tiny (0.7)} \\
          & APFL       & \second{80.59 \tiny (0.3)} & \second{83.27 \tiny (0.5)} & \second{80.91 \tiny (0.1)} \\
          & FedPHP     & \second{78.25 \tiny (0.6)} & \second{77.31 \tiny (0.7)} & \second{76.36 \tiny (0.4)} \\
          & PerFed-CKT & \second{79.24 \tiny (0.4)} & \second{80.16 \tiny (0.2)} & \second{82.49 \tiny (0.1)} \\
        \hline
        & \pfedct(W)    & \best{\textbf{87.44 \tiny (0.02)}} & \best{\textbf{88.52 \tiny (0.2)}} & \best{\textbf{89.06 \tiny (0.01)}} \\
        & \pfedct(U)    & \best{\textbf{88.36 \tiny (0.01)}} & \best{\textbf{87.35 \tiny (0.1)}} & \best{\textbf{89.43 \tiny (0.03)}} \\
        \hline
    \end{tabular}
    }
    \label{tab:hybrid-settings}
\end{table}

\paragraph{The Effect of the Unlabeled Dataset:}
\pfedct relies heavily on a shared unlabeled dataset $|U|$. To understand how sensitive \pfedct is to the characteristics of this dataset, we conducted a study on the effect of the size and distribution of this dataset. We simulated varying degrees of skew by sampling $|U|$ (with a fixed size of 3,000) using a Dirichlet distribution. We tested concentration parameters $\alpha=\{1,0.7,0.5,0.3,0.1\}$, where $\alpha=1$ corresponds to a perfectly IID distribution and lower values induce increasingly severe skew.
As shown in Fig.~\ref{fig:u_FedMOSIC} and Fig.~\ref{fig:u_distribution_main}, performance degrades as the public dataset becomes more skewed, especially at low $\alpha$ (e.g., $0.3, 0.1$) where some classes are missing. However, a key finding is that \pfedct never performs worse than the local baseline. This highlights the robustness of the adaptive aggregation scheme: when the global signal is unhelpful, the dynamic weight $\lambda$ steers clients toward local training, acting as a fail-safe. More details are provided in App.~\ref{app:additional-exp}.
\begin{figure}[H]
\begin{minipage}[H]{0.48\textwidth}
    \centering
    \includegraphics[width=\linewidth]{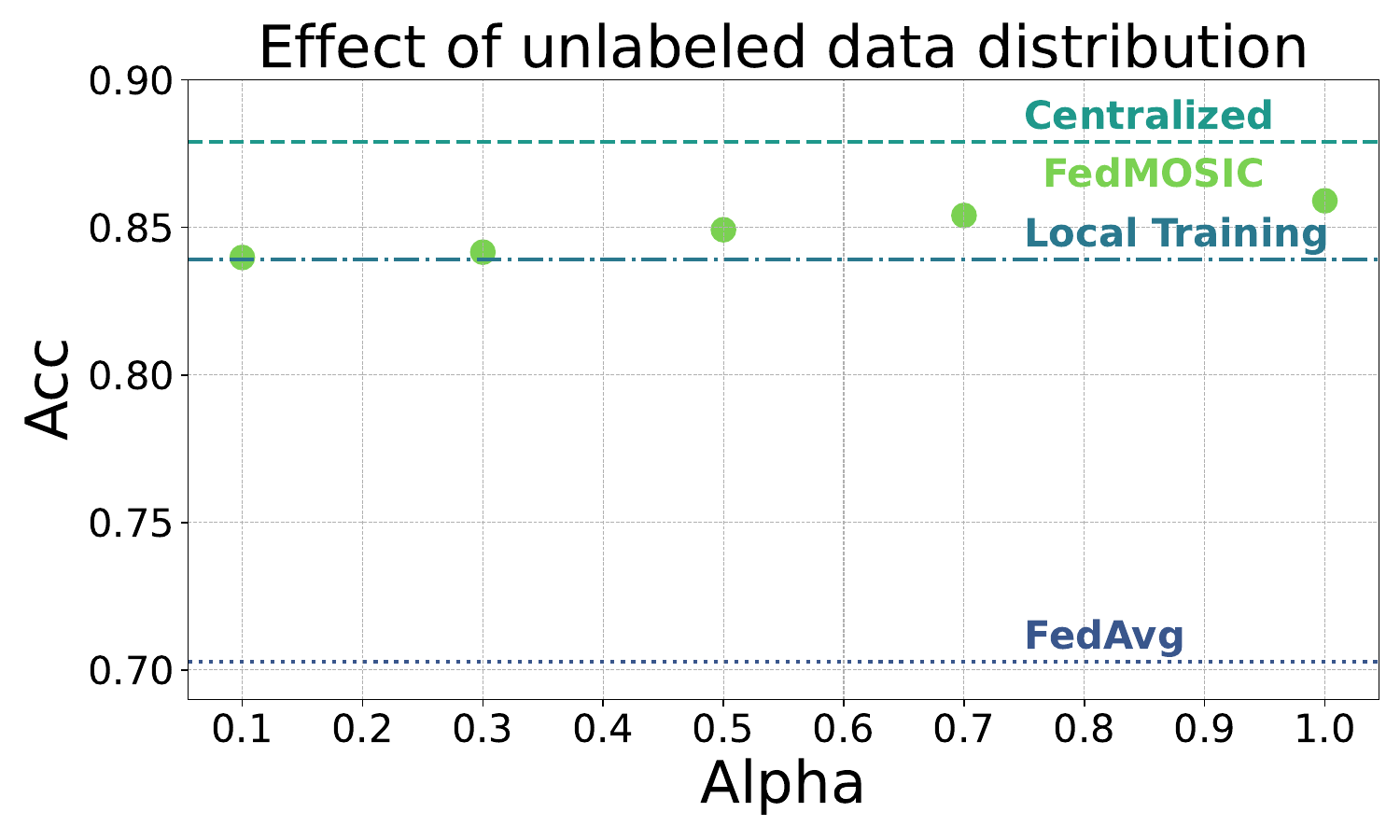}
    \caption{ Average test accuracy of \pfedct on CIFAR-10 under different distribution of $U$. }
    \label{fig:u_FedMOSIC}
\end{minipage}\hfill
\begin{minipage}[H]{0.48\textwidth}
    \centering
    \includegraphics[width=\linewidth]{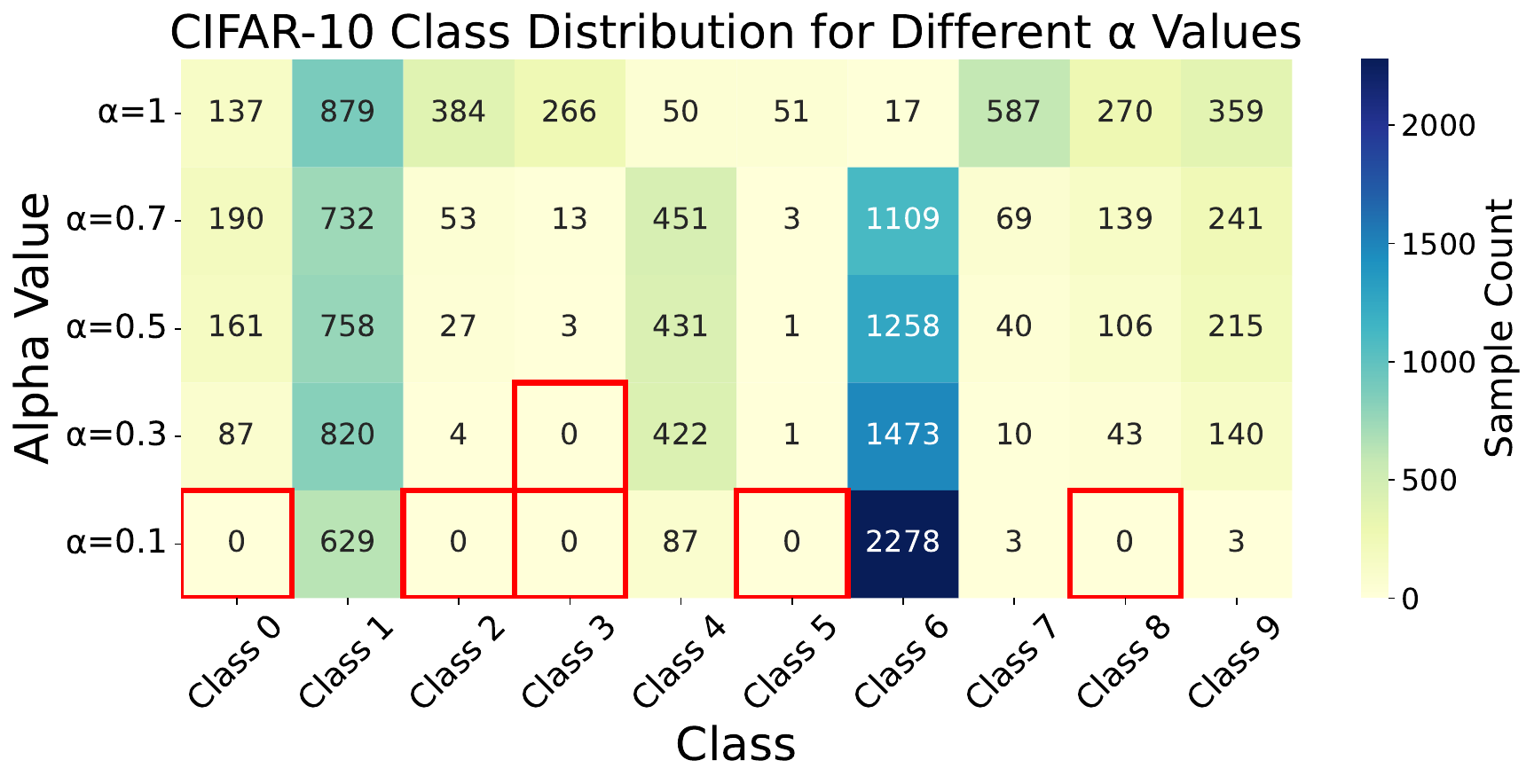}
    \caption{Class distribution of $U$ under different values of alpha.}
    \label{fig:u_distribution_main}
\end{minipage}
\end{figure}

\section{Discussion and Conclusion}
\label{sec:discussion}

Personalized Federated Learning (PFL) aims to address data heterogeneity by tailoring models to client-specific distributions. Yet, as we have demonstrated, many existing approaches fall short of their promise, often failing to outperform even local training or centralized baselines. This raises fundamental concerns about the core premise of collaboration in personalized federated learning.

We argue that meaningful personalization in federated learning requires more than per-client modeling: it must involve adaptive, data-specific collaboration. In particular, effective PFL methods should support example-level decision-making, allowing clients to modulate the degree and direction of collaboration based on local context and per-sample reliability. Without this level of adaptivity, personalization risks becoming a superficial modification of global training.

\pfedct is one concrete instantiation of this principle. It enables example-level collaboration through dynamic loss weighting and confidence-based aggregation over a shared unlabeled dataset. Unlike prior methods that personalize only at the client level, \pfedct allows each client to adapt both how much and whom to trust, based on the alignment between public and private data.

Empirical results across a diverse set of non-IID scenarios support the effectiveness of this approach. In the hybrid scenario, which combines label skew and feature shift, \pfedct outperforms all competitors and baselines by a wide margin. In the feature shift scenarios, it again surpasses all methods across most domains, often with substantial gains. In the label skew setting, \pfedct consistently achieves the best performance for the pathological non-IID scenario, though with very narrow margins, in particular with respect to local training. In the practical non-IID scenario with milder heterogeneity, centralized training performs best, as expected. Yet, traditional federated learning methods fall short, being outperformed by several PFL approaches, including \pfedct.

These results illustrate both the strengths and limitations of personalized FL. One limitation is that, particularly in the label skew setting, the advantage over strong local baselines can be modest. Such scenarios, especially the pathological non-IID one, raise the question of whether collaboration is truly justified, and whether evaluation setups that favor strong local baselines but show weak global benefit are well-posed. We therefore emphasize the need for more meaningful benchmarks: scenarios where collaboration has a clear potential upside, and where the evaluation criteria capture the practical value of federated interaction, not just statistical differences. That said, \pfedct demonstrates that adaptive and data-aware collaboration is both feasible and effective. Across our experiments, it outperforms both local and centralized baselines in most settings, supporting its robustness and practical utility.

While \pfedct represents a principled and practically validated advance in personalized federated learning, it also opens new directions for future work. A key limitation is the assumption of a public unlabeled dataset. Although such datasets exist in many domains, e.g., healthcare, vision, and language, it remains an open question how to extend this paradigm when such data are limited or unavailable. Developing mechanisms for privacy-preserving dataset synthesis, or leveraging foundation models for public data distillation, could further broaden the applicability of our framework.

\bibliography{iclr2026_conference}
\bibliographystyle{iclr2026_conference}

\vfill
\pagebreak
\appendix

\section{Proof of Theorem}
\label{app-proof-theorm}
In the following, we proof Proposition~\ref{theorem-convergence}. For convenience, we restate the assumptions and proposition.

\setcounter{assumptions}{0}
\begin{assumptions}
The following conditions hold for each client $ i \in [m] $ at round $ t $:
\begin{enumerate}
    \item Each loss function $\mathcal{L}_i^{\mathrm{local}}$ and $\mathcal{L}_i^{\mathrm{global}, t}$ is $L(1+e)^{-1}$-smooth.
    \item The gradient estimator $g_i^t$ is unbiased and has bounded variance:
    \[
    \mathbb{E}[g_i^t] = \nabla \mathcal{L}_i^t(\theta_t), \quad \mathbb{E}[\|g_i^t - \nabla \mathcal{L}_i^t(\theta_t)\|^2] \leq \sigma^2.
    \]    
    \item The global loss has bounded gradients: $\|\nabla \mathcal{L}_i^{\mathrm{global}, t}(\theta)\| \leq G$ for all $\theta$ and $t$.
    \item The objective drift is bounded:
    \[
    |\mathcal{L}_i^{t+1}(\theta) - \mathcal{L}_i^t(\theta)| \leq \delta, \quad \forall \theta.
    \]
    \item The per-sample gradient variance is bounded:
    \begin{equation*}
    \begin{split}
    \mathbb{E}_{x\in D_i}\left[\left\|\nabla_\theta\ell(\theta,x,\hat{y}^t)-\nabla\mathcal{L}_i^{local,t}\right\|^2\right]\leq \overline\sigma^2\\
    \mathbb{E}_{x\in U}\left[\left\|\nabla_\theta\ell(\theta,x,\hat{y}^t)-\nabla\mathcal{L}_i^{global,t}\right\|^2\right]\leq \tilde\sigma^2\\
    \end{split}
    \end{equation*}
\end{enumerate}
\end{assumptions}
With these assumptions, \pfedct converges to a stationary point.

\setcounter{proposition}{0}
\begin{proposition}[Convergence of \pfedct]
Let each client's objective at round $t$ be
\[
\mathcal{L}_i^t(\theta) = \mathcal{L}_i^{\mathrm{local}}(\theta) + \lambda_i^t \mathcal{L}_i^{\mathrm{global}, t}(\theta),\text{ where }\lambda_i^t = \exp\left(-\frac{\mathcal{L}_i^{\mathrm{global}}(\theta_t) - \mathcal{L}_i^{\mathrm{local}, t}(\theta_t)}{\mathcal{L}_i^{\mathrm{local}, t}(\theta_t)}\right),
\]
and $\mathcal{L}_i^{\mathrm{global}, t}$ may change at each round due to pseudo-label updates. Under Assumptions 1-5, for a fixed step size $0<\eta\leq(2L)^{-1}$ and $\min_i |D_i|=d$, after $T$ rounds of \pfedct, it holds that
\[
\frac{1}{T} \sum_{t=0}^{T-1} \mathbb{E}[\|\nabla \mathcal{L}_i^t(\theta_t)\|^2]
\leq \frac{4L\left(\mathcal{L}_i^0 - \mathcal{L}_i^*\right)}{T} + \frac{\overline\sigma^2}{2Ld} +  \frac{e^2\tilde\sigma^2}{2L|U|} + 2\delta\enspace .
\]
\end{proposition}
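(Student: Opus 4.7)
The plan is to carry out a standard non-convex SGD descent analysis, suitably extended to handle two non-standard features of \pfedct: the composite objective $\mathcal{L}_i^t = \mathcal{L}_i^{\mathrm{local}} + \lambda_i^t \mathcal{L}_i^{\mathrm{global},t}$ with a time-varying weight $\lambda_i^t$, and the non-stationarity of $\mathcal{L}_i^{\mathrm{global},t}$ induced by the pseudo-label updates. First I would verify that $\mathcal{L}_i^t$ is $L$-smooth: since each of $\mathcal{L}_i^{\mathrm{local}}$ and $\mathcal{L}_i^{\mathrm{global},t}$ is $L(1+e)^{-1}$-smooth by Assumption~1 and $\lambda_i^t \in [0,e]$ by construction, the triangle inequality for Lipschitz constants gives $L(1+e)^{-1}(1+\lambda_i^t) \leq L$. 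This is the payoff for the unusual normalization in Assumption~1.

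Next I would apply the descent lemma for $L$-smooth functions to the update $\theta_{t+1} = \theta_t - \eta g_i^t$, take expectation conditional on $\theta_t$, and use the bias--variance decomposition $\mathbb{E}\|g_i^t\|^2 = \|\nabla \mathcal{L}_i^t(\theta_t)\|^2 + \mathrm{Var}(g_i^t)$ via Assumption~2. The variance of the composite estimator needs a dedicated argument: since mini-batches drawn from $D_i$ and from $U$ are independent, the two variance contributions add, giving $\mathrm{Var}(g_i^t) \leq \overline\sigma^2/|D_i| + (\lambda_i^t)^2 \tilde\sigma^2/|U|$ by Assumption~5; using $|D_i| \geq d$ and $\lambda_i^t \leq e$ uniformly bounds this by $\overline\sigma^2/d + e^2\tilde\sigma^2/|U|$. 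Together with the step-size choice $\eta \leq (2L)^{-1}$, which ensures $\eta - L\eta^2/2 \geq \eta/2$, this yields the one-step inequality
\[
\tfrac{\eta}{2}\,\|\nabla \mathcal{L}_i^t(\theta_t)\|^2 \;\leq\; \mathcal{L}_i^t(\theta_t) - \mathbb{E}[\mathcal{L}_i^t(\theta_{t+1})] + \tfrac{L\eta^2}{2}\bigl(\tfrac{\overline\sigma^2}{d} + \tfrac{e^2\tilde\sigma^2}{|U|}\bigr).
\]

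To turn this into a telescoping sum across rounds, I would handle the non-stationarity by splitting
\[
\mathcal{L}_i^{t+1}(\theta_{t+1}) - \mathcal{L}_i^t(\theta_t) = \bigl[\mathcal{L}_i^{t+1}(\theta_{t+1}) - \mathcal{L}_i^t(\theta_{t+1})\bigr] + \bigl[\mathcal{L}_i^t(\theta_{t+1}) - \mathcal{L}_i^t(\theta_t)\bigr],
\]
bounding the first bracket by $\delta$ uniformly via Assumption~4 and the second by the descent inequality above. Summing the resulting inequality from $t=0$ to $T-1$, collapsing the telescoping terms, and using $\mathcal{L}_i^T(\theta_T) \geq \mathcal{L}_i^*$ yields a bound on $\sum_t \|\nabla \mathcal{L}_i^t(\theta_t)\|^2$. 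Dividing by $T\eta/2$ and substituting $\eta = (2L)^{-1}$ recovers the leading $4L(\mathcal{L}_i^0 - \mathcal{L}_i^*)/T$ term together with the additive noise and drift terms in the stated form.

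The genuinely non-routine step is the non-stationarity handling: without Assumption~4 the per-round potential function changes unpredictably, and the telescoping identity above is the cleanest way to absorb this drift into an additive $\mathcal{O}(\delta)$ term rather than something multiplicative in $T$. This is also where the argument relies most heavily on \citet{abourayya2025little}, whose pseudo-label stabilization result is what makes $\delta$ effectively negligible after a short burn-in. A secondary subtlety is that $\lambda_i^t$ depends on $\theta_t$ through the loss ratio, so in principle it is a random quantity correlated with the gradient; the proof avoids this complication by only ever using the uniform bound $\lambda_i^t \in [0,e]$, which is sufficient for both the smoothness and variance calculations above.
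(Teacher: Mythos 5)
Your proposal follows essentially the same route as the paper's proof: bound the smoothness constant of the composite objective by $L$ using $\lambda_i^t\le e$, apply the descent lemma, absorb the round-to-round change of $\mathcal{L}_i^{\mathrm{global},t}$ via Assumption~4 into an additive $\delta$ before telescoping, and decompose the gradient variance into the $\overline\sigma^2/d$ and $e^2\tilde\sigma^2/|U|$ contributions using $\sup\lambda_i^t\le e$. The only substantive difference is that you use the exact bias--variance identity for $\mathbb{E}\|g_i^t\|^2$ where the paper uses the cruder bound $2\sigma^2+2\|\nabla\mathcal{L}_i^t\|^2$, which changes the intermediate coefficients but not the structure. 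One caveat: when you actually divide your summed inequality by $T\eta/2$ and set $\eta=(2L)^{-1}$, the drift term comes out as $2\delta/\eta=4L\delta$ and the variance term as $\tfrac12\bigl(\overline\sigma^2/d+e^2\tilde\sigma^2/|U|\bigr)$, neither of which matches the stated $2\delta$ and $\overline\sigma^2/(2Ld)+e^2\tilde\sigma^2/(2L|U|)$; the paper's own derivation contains the same $\eta^{-1}$ bookkeeping slip in its final division, so this is a discrepancy with the stated constants rather than a flaw specific to your argument.
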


\begin{proof}
Since $\mathcal{L}_i^{\mathrm{local}}$ and $\mathcal{L}_i^{\mathrm{global}, t}$ are $L(1+e)^{-1}$-smooth, and since during  optimization steps $\lambda_i^t<e$ is fixed, the Lipschitz constant of $\mathcal{L}_i^t$ is
\[
L(1+e)^{-1}+\lambda_i^tL(1+e)^{-1} \leq L(1+e)^{-1}+eL(1+e)^{-1} = L\enspace. 
\]
Thus, the standard descent lemma~\citep{bottou2018optimization} gives:
\[
\mathbb{E}[\mathcal{L}_i^t(\theta_{t+1})] \leq \mathbb{E}[\mathcal{L}_i^t(\theta_t)] - \eta \mathbb{E}[\|\nabla \mathcal{L}_i^t(\theta_t)\|^2] + \frac{L\eta^2}{2} \mathbb{E}[\|g_i^t\|^2].
\]
To bound $\mathbb{E}[\|g_i^t\|^2]$, expand
\[
\mathbb{E}[\|g_i^t\|^2] = \mathbb{E}[\|g_i^t - \nabla \mathcal{L}_i^t(\theta_t) + \nabla \mathcal{L}_i^t(\theta_t)\|^2] 
\leq 2\sigma^2 + 2\mathbb{E}[\|\nabla \mathcal{L}_i^t(\theta_t)\|^2],
\]
and substitute into the descent inequality to obtain
\[
\mathbb{E}[\mathcal{L}_i^t(\theta_{t+1})] \leq \mathbb{E}[\mathcal{L}_i^t(\theta_t)] - \eta \mathbb{E}[\|\nabla \mathcal{L}_i^t(\theta_t)\|^2] + L\eta^2\left(\sigma^2 + \mathbb{E}[\|\nabla \mathcal{L}_i^t(\theta_t)\|^2]\right).
\]
Rearranging terms yields
\[
\mathbb{E}[\mathcal{L}_i^t(\theta_{t+1})] \leq \mathbb{E}[\mathcal{L}_i^t(\theta_t)] - \eta(1 - L\eta) \mathbb{E}[\|\nabla \mathcal{L}_i^t(\theta_t)\|^2] + L\eta^2\sigma^2.
\]
This step requires $\eta \leq (2L)^{-1} < L^{-1}$ to ensure that the coefficient $(1 - L\eta)$ is positive.
We now account for the fact that the function changes between rounds, i.e., 
\[
\mathbb{E}[\mathcal{L}_i^{t+1}(\theta_{t+1})] \leq \mathbb{E}[\mathcal{L}_i^t(\theta_{t+1})] + \delta,
\]
which gives
\[
\mathbb{E}[\mathcal{L}_i^{t+1}(\theta_{t+1})] \leq \mathbb{E}[\mathcal{L}_i^t(\theta_t)] - \eta(1 - L\eta) \mathbb{E}[\|\nabla \mathcal{L}_i^t(\theta_t)\|^2] + L\eta^2\sigma^2 + \delta.
\]
Summing from $t=0$ to $T-1$ and rearranging yields
\[
\frac{1}{T} \sum_{t=0}^{T-1} \mathbb{E}[\|\nabla \mathcal{L}_i^t(\theta_t)\|^2]
\leq \frac{\mathcal{L}_i^0 - \mathcal{L}_i^T}{(1 - L\eta)\eta T} + \frac{L\eta^2\sigma^2}{1 - L\eta} + \frac{\delta}{1 - L\eta}.
\]
Denoting the minimum loss as $\mathcal{L}_i^*$, i.e., $\forall t$, $\mathcal{L}_i^t\geq \mathcal{L}_i^*$ yields the formal result
\[
\frac{1}{T} \sum_{t=0}^{T-1} \mathbb{E}[\|\nabla \mathcal{L}_i^t(\theta_t)\|^2]
\leq \frac{\mathcal{L}_i^0 - \mathcal{L}_i^*}{(1 - L\eta)\eta T} + \frac{L\eta^2\sigma^2}{1 - L\eta} + \frac{\delta}{1 - L\eta}.
\]
Since $((1 - L\eta)\eta)^{-1}$, $L\eta^2/(1-L\eta)$, and $(1 - L\eta)^{-1}$ have a maximum at $(2L)^{-1}$ for $\eta\leq(2L)^{-1}$, we can upper bound this by
\[
\frac{1}{T} \sum_{t=0}^{T-1} \mathbb{E}[\|\nabla \mathcal{L}_i^t(\theta_t)\|^2]
\leq \frac{4L\left(\mathcal{L}_i^0 - \mathcal{L}_i^*\right)}{T} + \frac{\sigma^2}{4L} + 2\delta\enspace .
\]
Since $g_i^t=g_i^{local,t}+\lambda_i^t g_i^{global,t}$, we decompose $\sigma^2$ in round $t$ at client $i$ as $2\overline\sigma^2+2(\lambda_i^t)^2\tilde\sigma^2$, and further bound
\begin{equation*}
    \begin{split}
        \sigma_{global}^2\leq & \frac{\mathbb{E}_{x\in D_i}\left[\left\|\nabla_\theta\ell(\theta,x,\hat{y}^t)-\nabla\mathcal{L}_i^{local,t}\right\|^2\right]}{\min_i |D_i|}\\
        & + \sup_{i,t}(\lambda_i^t)^2\frac{\mathbb{E}_{x\in U}\left[\left\|\nabla_\theta\ell(\theta,x,\hat{y}^t)-\nabla\mathcal{L}_i^{global,t}\right\|^2\right]}{|U|}\\
        \leq & \frac{2\overline\sigma^2}{d} + \frac{2e^2\tilde\sigma^2}{|U|}\enspace ,
    \end{split}
\end{equation*}
since $\sup_{i,t}(\lambda_i^t)^2=e^2$ and using Assumption 5. With this, we obtain obtain
\[
\frac{1}{T} \sum_{t=0}^{T-1} \mathbb{E}[\|\nabla \mathcal{L}_i^t(\theta_t)\|^2]
\leq \frac{4L\left(\mathcal{L}_i^0 - \mathcal{L}_i^*\right)}{T} + \frac{\overline\sigma^2}{2Ld} +  \frac{e^2\tilde\sigma^2}{2L|U|} + 2\delta\enspace .
\]
\end{proof}

\section{Additional Empirical Evaluation}
\label{app:additional-exp}

\paragraph{}

\paragraph{Robustness under Misleading Global Knowledge}
To further evaluate \pfedct's adaptivity, we conducted an experiment designed to test its behavior when the global consensus signal is actively misleading for a particular client. We constructed a scenario using CIFAR-10 dataset with $5$ clients, where client $0$ was assigned flipped labels so effectively training on corrupted data. This setup results in the global pseudo labels being systematically misaligned with this client's local distribution. As expected the client's local model suffers a significantly higher loss when trained using the global pseudo labels compared to its own data, leading to a near zero value of $\lambda$. This confirms the intended behavior of \pfedct: when the global signal is detrimental, the client autonomously reduces its reliance on it, effectively opting out of harmful collaboration. Fig.\ref{fig:global_flipped} illustrates this behavior by showing the divergence between global and local loss for the corrupted client (client $0$) in comparison to a non-corrupted one (client $1$). Fig.~\ref{fig:lambda_flipped} shows the evolution of the adaptive weight $\lambda$ across communication rounds for all $5$ clients.

\begin{figure}[H]
\begin{minipage}[H]{0.48\textwidth}
    \centering
    \includegraphics[width=\linewidth]{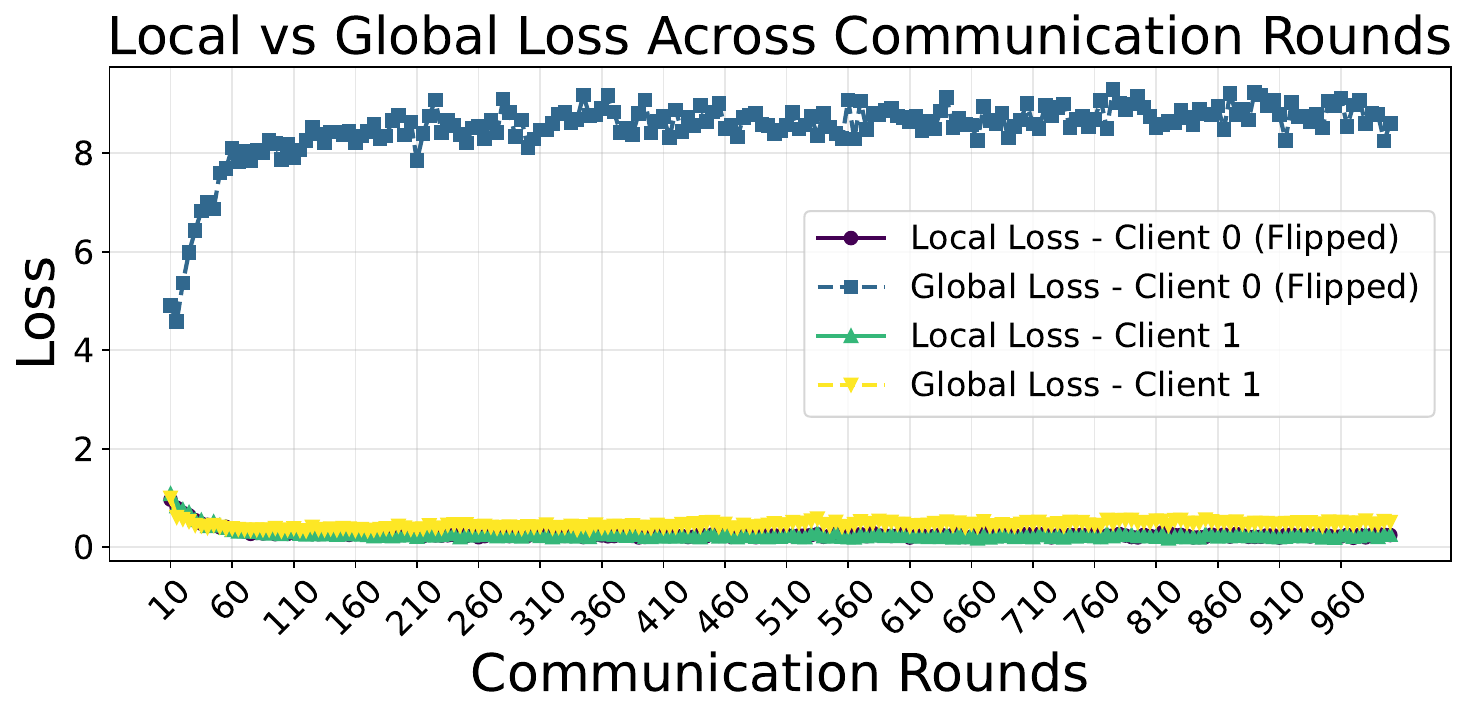}
    \caption{Local Vs Global loss across communication rounds on CIFAR-10.}
    \label{fig:global_flipped}
\end{minipage}\hfill
\begin{minipage}[H]{0.48\textwidth}
    \centering
    \includegraphics[width=\linewidth]{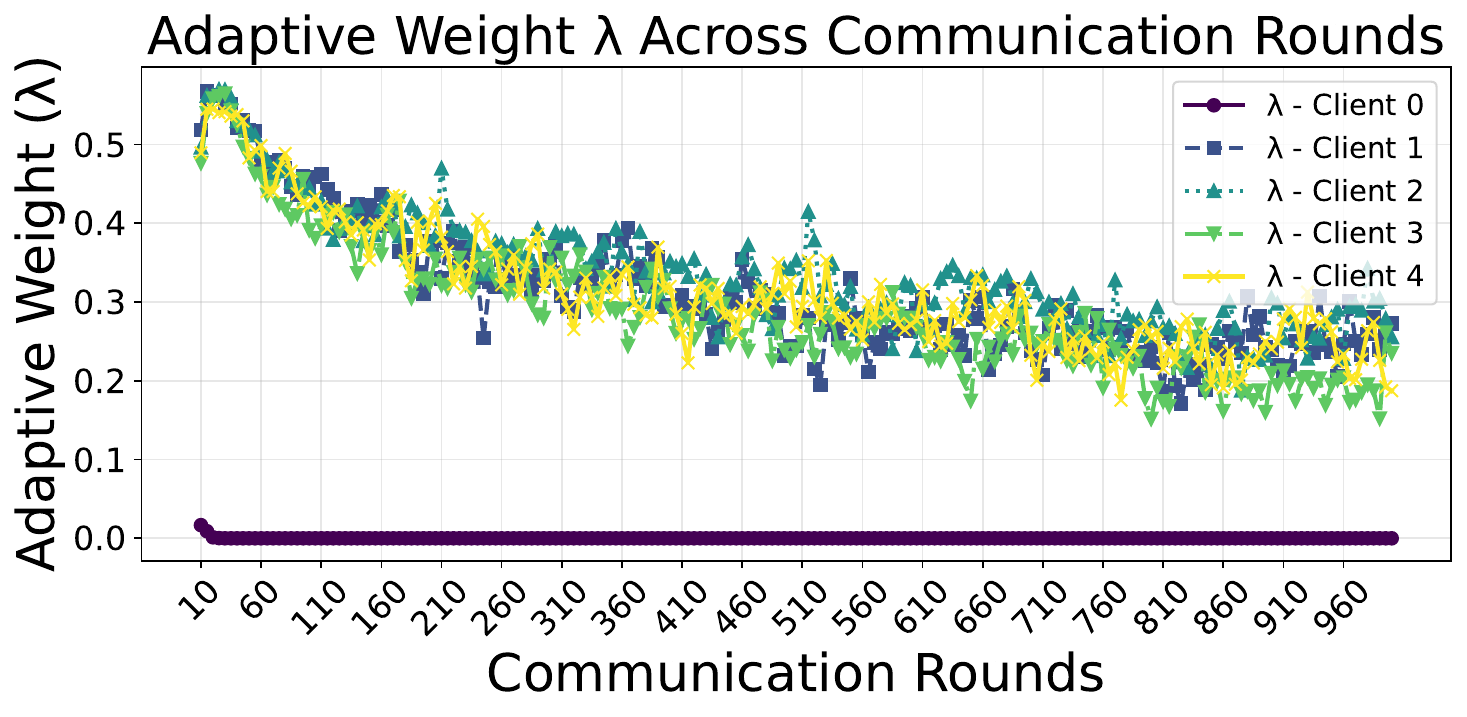}
    \caption{Adaptive weight $\lambda$ across communication rounds on CIFAR-10.}
    \label{fig:lambda_flipped}
\end{minipage}
\end{figure}

\paragraph{Personaliztion vs. Local training In Low-collaboration Regimes}

While \pfedct consistently archives the highest accuracy across both pathological and practical label skew settings (Table\ref{tab:label-skew}, the margin between its performance and that of local training is notably small. This observation raises a critical insight. In such scenarios, where each client's local distribution is highly disjoint and local alignment provides limited benefit, personalization through collaboration may be unnecessary or even detrimental. Indeed, \pfedct's adaptive mechanism reflects this reality. The per-client weighting strategy reduces reliance on the global information when it does not align with local data. This is evident in Fig.\ref{fig:loca_skew} and Fig.\ref{fig:global_skew}, which show that the global loss remains consistently higher than the local loss for many clients, leading to near zero value of the adaptive weight $\lambda$ as seen in Fig.\ref{fig:lambda_skew}. In such cases,\pfedct defaults to local training behavior, effectively opting out of collaboration when it offers no advantage. This reinforces the methods' robustness as it personalizes only when beneficial, and falls back to local training when collaboration yields little or a negative return. To ensure numerical stability in the computation of the adaptive coefficient \[\lambda_i^t = \exp\left(-\frac{\mathcal{L}_i^{\mathrm{global}}(\theta_t) - \mathcal{L}_i^{\mathrm{local}, t}(\theta_t)}{\mathcal{L}_i^{\mathrm{local}, t}(\theta_t)}\right), \], we add a small constant $\epsilon$ to the denominator to prevent division by zero.

\begin{figure}[H]
\begin{minipage}[H]{0.48\textwidth}
    \centering
    \includegraphics[width=\linewidth]{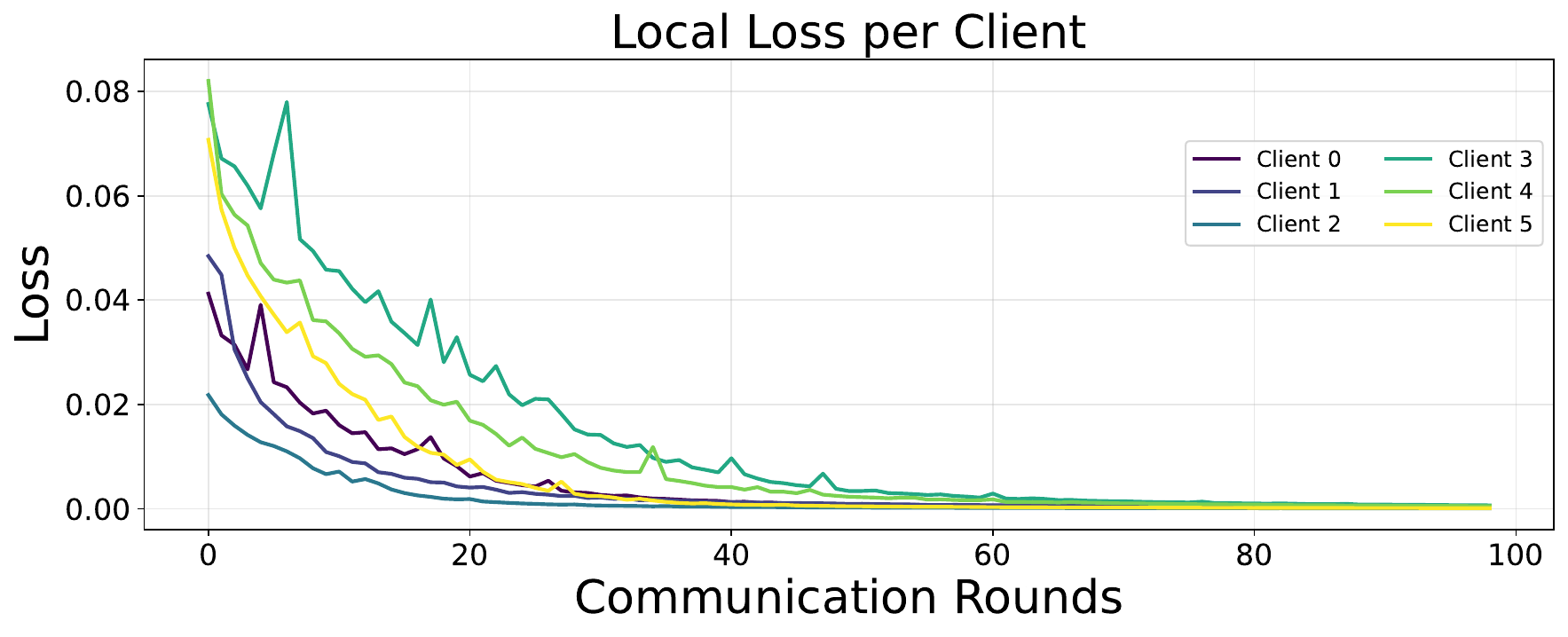}
    \caption{Local loss across communication rounds on Fashion-MNIST for the first $6$ clients.}
    \label{fig:loca_skew}
\end{minipage}\hfill
\begin{minipage}[H]{0.48\textwidth}
    \centering
    \includegraphics[width=\linewidth]{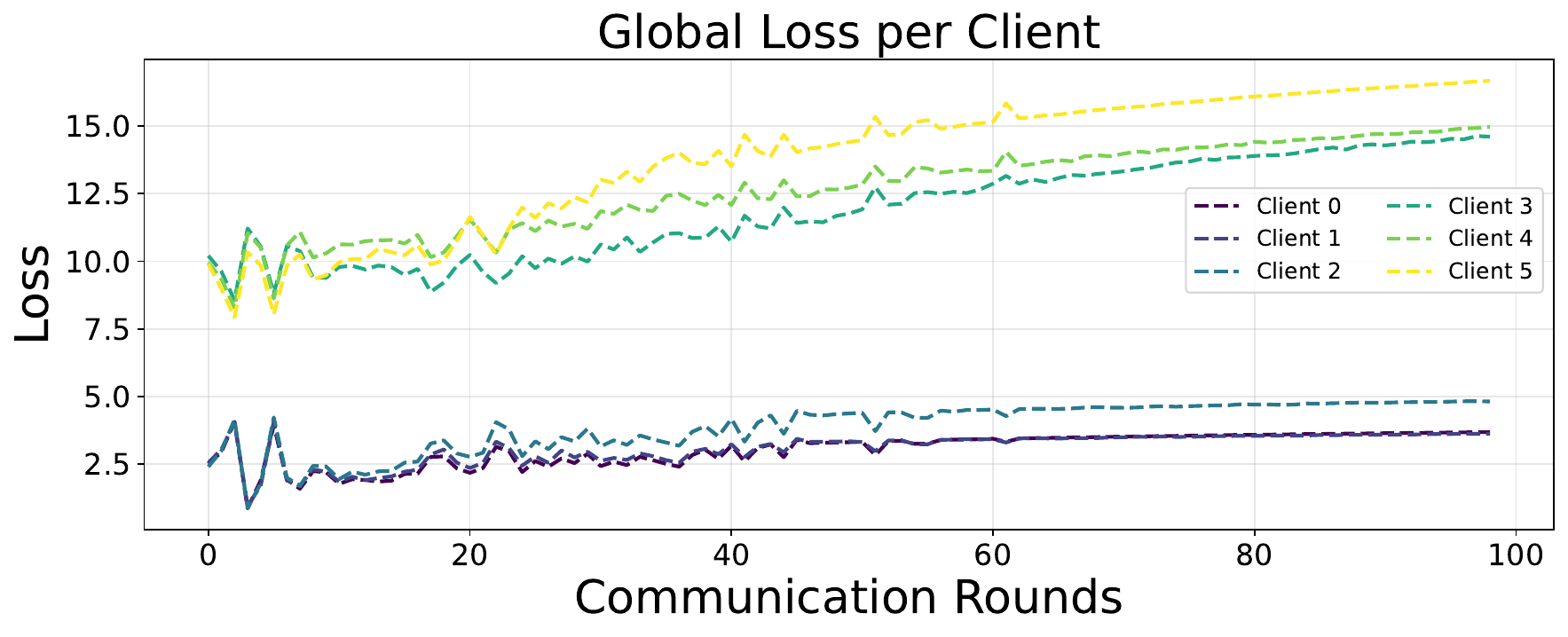}
    \caption{Global loss across communication rounds on Fashion-MINST for the first $6$ clients.}
    \label{fig:global_skew}
\end{minipage}
\end{figure}

\begin{figure}[H]
    \centering
    \includegraphics[width=0.5\linewidth]{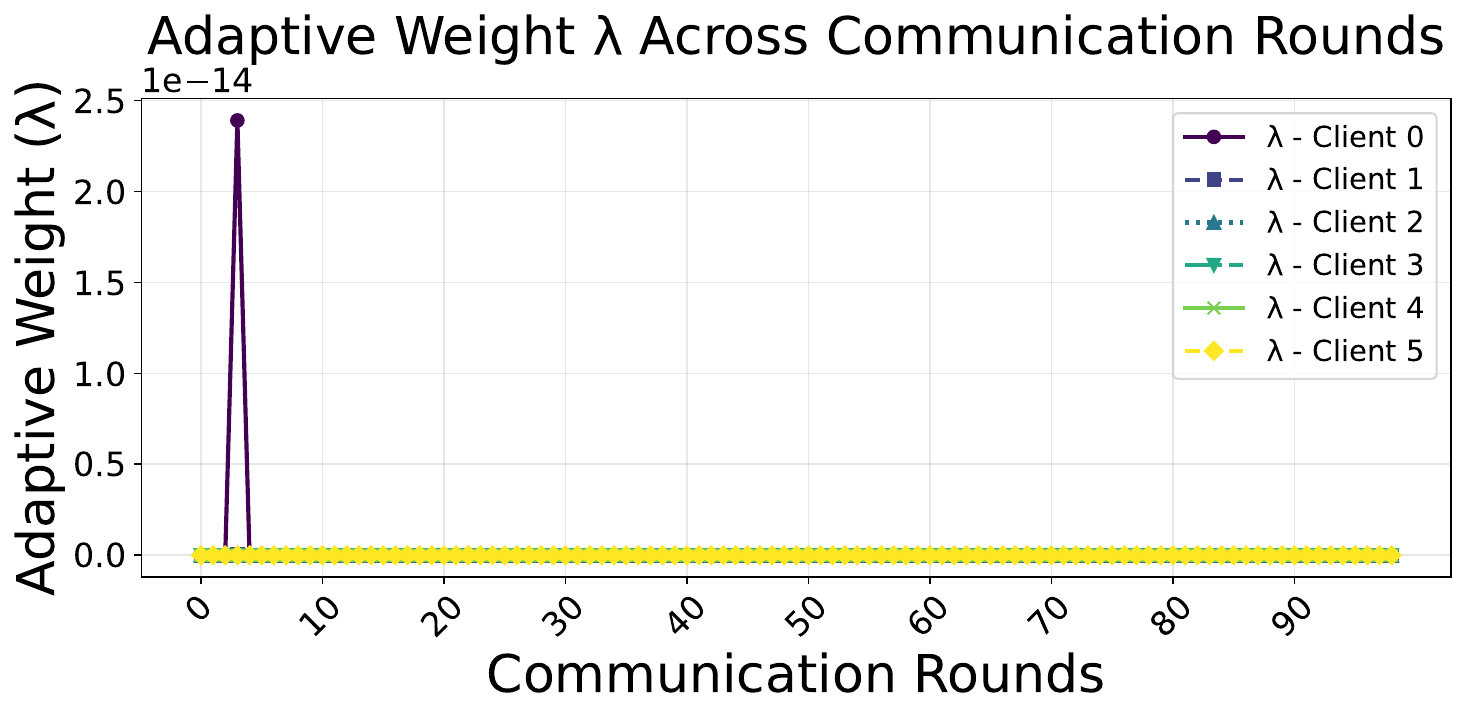}
    \caption{Adaptive weight $\lambda$ across communication rounds on Fashion-MINST for the first $6$ clients.}
    \label{fig:lambda_skew}
\end{figure}

\paragraph{The Effect of the Unlabeled Dataset}: As mensioned in sec.\ref{sec:experiments},\pfedct relies heavily on a shared, unlabeled public dataset $|U|$. To understand how sensitive \pfedct is to this dataset's characteristics, we conducted a study on CIFAR-10 dataset focusing on two critical questions: First, how does the amount of available data affect performance? Second, does it matter whether the class distribution is balanced (IID) or heavily skewed?

\subparagraph{Impact of Public Dataset Size}: We evaluated the performance of \pfedct using different sizes of the public unlabeled dataset, with $|U|$ set to $3000$,$2000$,$1000$,$500$ and $250$. For this experiment, the public dataset was always sampled in an IID fashion to ensure all classes were present. The results, summarized in Fig.\ref{fig:U_size}, show that the performance of \pfedct is remarkably stable. Even as the size of the public dataset is reduced by over $90\%$ ( from 3000 to 250 samples), the drop in final test accuracy is minimal. This finding suggests that the collaboration mechanism does not require a large volume of public unlabeled data. As long as a small class-representative set of examples is available, clients can effectively share knowledge and build high-quality personalized models.

\begin{figure}[H]
    \centering
    \includegraphics[width=0.5\linewidth]{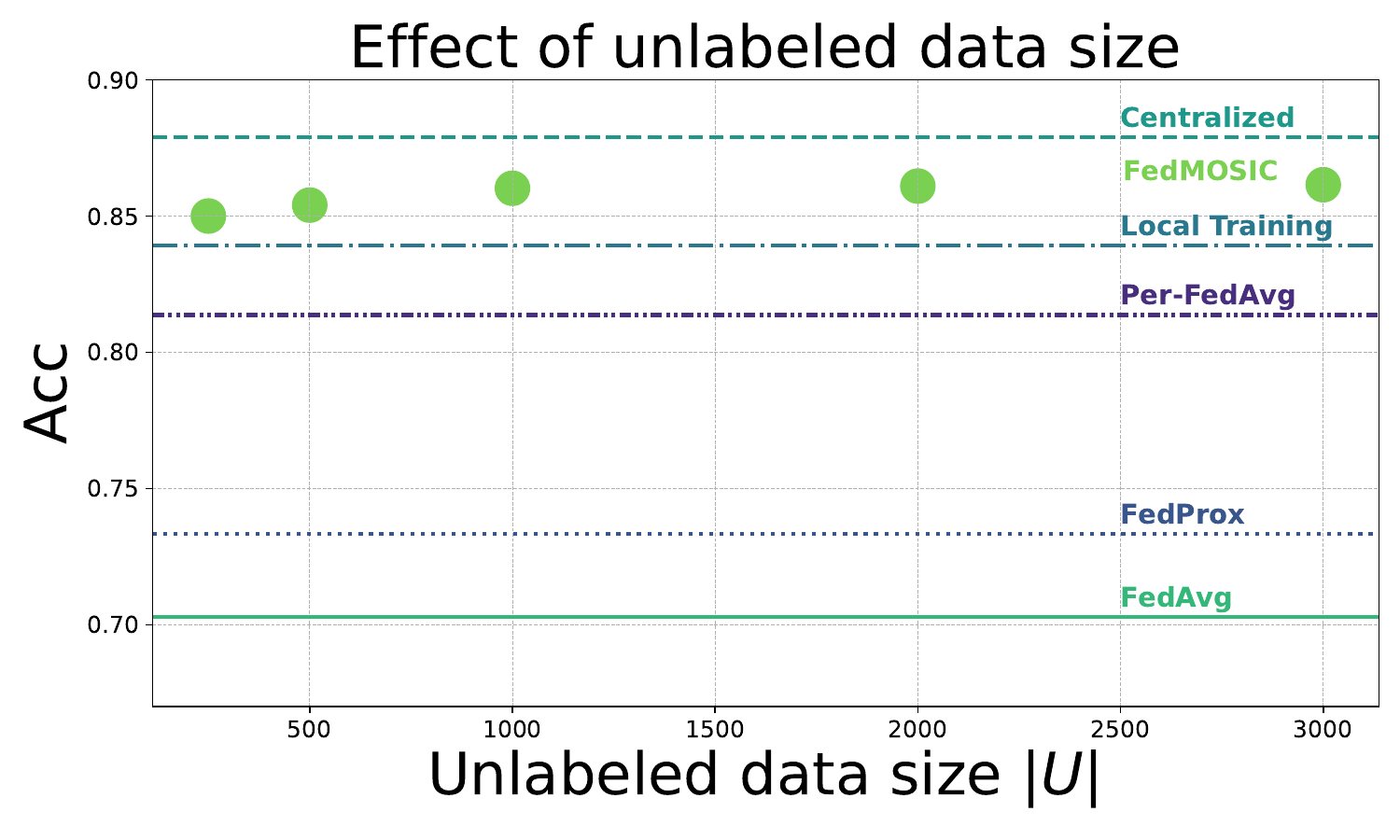}
    \caption{ Test accuracy (ACC) of \pfedct under different unlabeled dataset size $|U|$ }
    \label{fig:U_size}
\end{figure}

\subparagraph{Impact of Public Dataset Distribution}: Next, we studied the effect of the public unlabeled dataset distribution. We simulated varying degrees of distribution skew by sampling $|U|$ (with a fixed size of 3,000) using a Dirichlet distribution. We tested different values of the concentration parameter $\alpha={1,0.7,0.5,0.3,0.1}$, where $\alpha=1$ corresponds to a perfectly IID distribution and lower values induce increasingly severe skew.

As shown in Fig.\ref{} and Fig.\ref{fig:u_distribution}, we observe a degradition in performance as the public dataset become more skewed. The most significant drop occurs at very low $\alpha$ values (e.g.,0.3,0.1 ), where some classes are absent from $U$. In such cases, the global consensus offers no useful information for clients whose private data contains the missing classes. 

\begin{figure}[H]
\begin{minipage}[H]{0.48\textwidth}
    \centering
    \includegraphics[width=\linewidth]{figs/U_dist_FedMOSIC.pdf}
    \caption{Test accuracy (ACC) of \pfedct under different distribution of $U$. }
    \label{}
\end{minipage}\hfill
\begin{minipage}[H]{0.48\textwidth}
    \centering
    \includegraphics[width=\linewidth]{figs/cifar10_alpha_distribution.pdf}
    \caption{Class distributiuon of $U$ under different values of alpha.}
    \label{fig:u_distribution}
\end{minipage}
\end{figure}

However, the most crucial finding is that the performance of \pfedct never drops below the local training baseline. This demonstrates the robustness of the adaptive aggregation scheme. When the global signal becomes irrelevant or misleading, the dynamic loss weight $\lambda$ automatically steers clients to disregard it, effectively defaulting to local training. This acts as a critical fail-safe, ensuring that collaboration is never actively detrimental, even when the public data is of poor quality.

\paragraph{A Note on the Byzantine Resilience of \pfedct}
Following the argument by \citep{jiang2020federated}, who show that federated semi-supervised learning with soft labels sharing (e.g., FedDistill) is more Byzantine resilient than \fedavg due to the bounded nature of the threat vector on the probability simplex, we argue that \pfedct exhibits similar ( if not stronger) resilience properties. Like FedCT \citep{abourayya2025little}, \pfedct relies on hard label sharing, further constraining the threat vector to a binary classification decision per example. Moreover, \pfedct incorporates confidence-based aggregation, which naturally downweights unreliable predictions. This mechanism provides an additional layer of robustness by reducing the influence of low confidence ( and potentially malicious) clients. While a formal analysis remains open, these properties suggest that \pfedct may be at least as Byzantine resilient as FedDistill and FedCT. Exploring this direction further is promising for future work.

\section{Details on Experiments}
\label{app:exp-details}
All experiments are conducted for a sufficient number of communication rounds until convergence, using three different random seeds. While the standard deviation across the three runs with different seeds is consistently small, this observation aligns with prior work \cite{zhang2023fedcp},\cite{zhang2023fedala}, \cite{zhang2023gpfl}.

\paragraph{Label Skew}

Fashion-Minst and CIFAR-10 datasets have been used for label skew experiments. In Fashion-Minst, we converted the raw grayscale $28\times28$ images into Pytorch tensors and normalized pixel values to the range $[-1,1]$ using a mean of $0.5$ and standard deviation of $0.5$. In CIFAR-10, we converted RGB $32\times32$ images into Pytorch tensors of shape $[3,32,32]$ and normalizes each color channel independently to the range of $[-1,1]$, using a mean of $0.5$ and standard deviation of $0.5$. The data is partitioned across $15$ clients. In a pathological non-IID setting, each client receives data from only $2$ out of $10$ classes. In a practical non-IID setting, data is distributed across $15$ clients using a Dirichlet distribution. This creates naturally overlapping, imbalanced label distributions among clients. Training data distribution of each scenario of CIFAR-10 are showing in Fig.\ref{fig:CIFAR10_path} and Fig.\ref{fig:CIFARO10_pract}.

\begin{figure}[H]
\begin{minipage}[H]{0.48\textwidth}
    \centering
    \includegraphics[width=\linewidth]{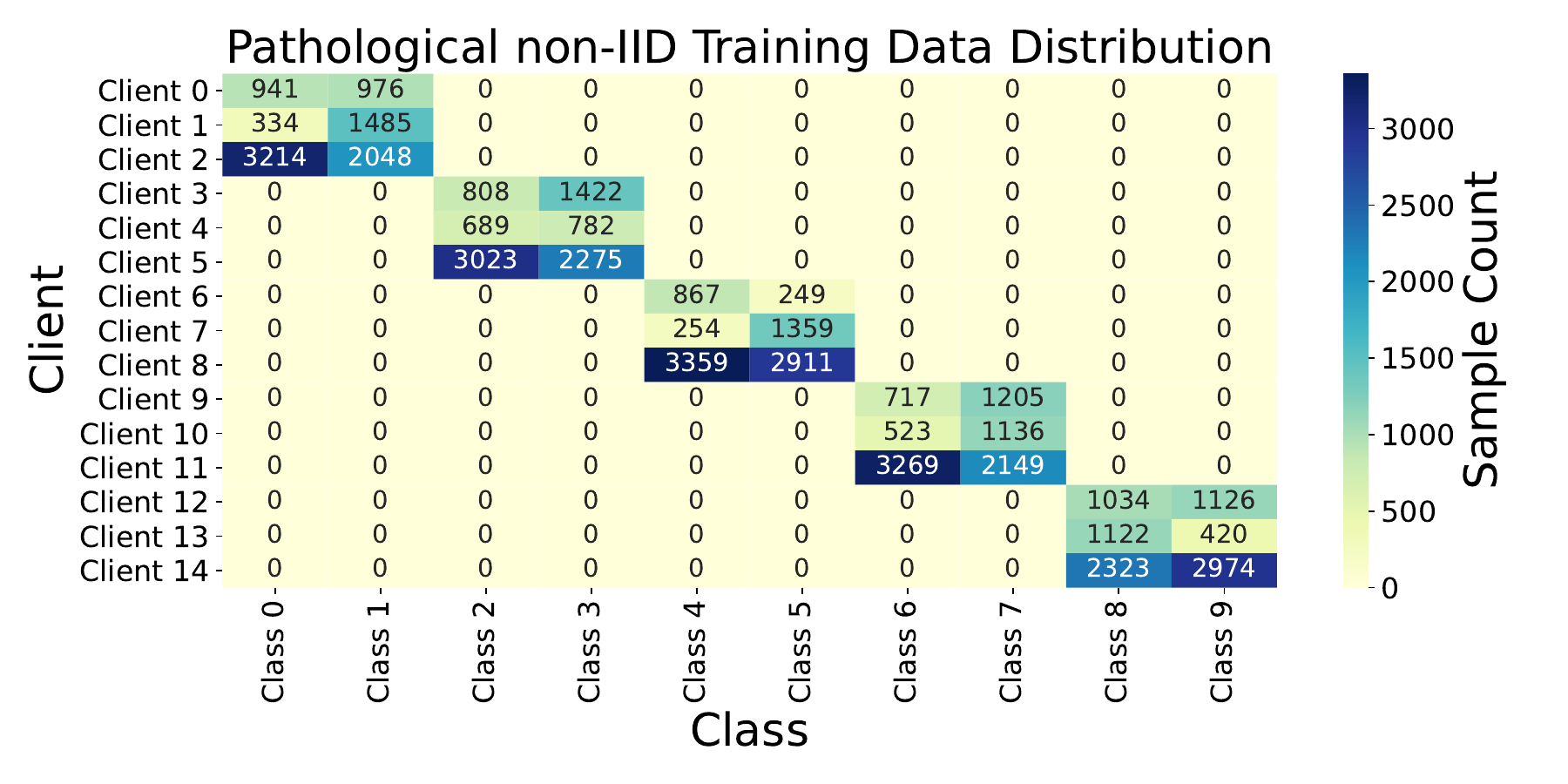}
    \caption{CIFAR-10 clients data distribution\\in Pathological non-IID setting}
    \label{fig:CIFAR10_path}
\end{minipage}\hfill
\begin{minipage}[H]{0.48\textwidth}
    \centering
    \includegraphics[width=\linewidth]{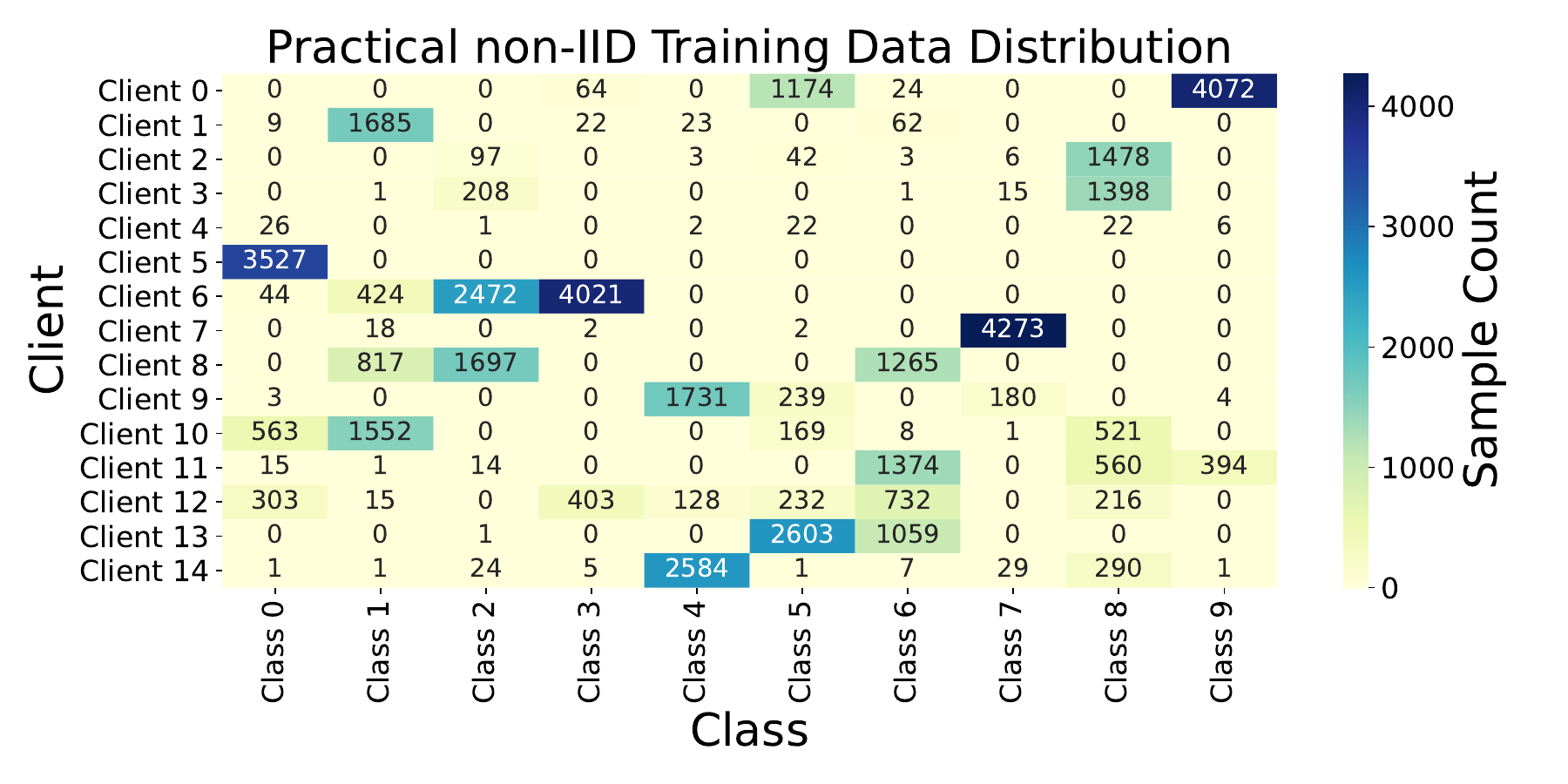}
    \caption{CIFAR-10 clients data distribution\\in Practical non-IID setting}
    \label{fig:CIFARO10_pract}
\end{minipage}
\end{figure}

\paragraph{Feature Shift}
we used the Office-10 and DomainNet datasets. For both, we adopt AlexNet as a neural network architecture. Input images are resized to $256\times 256\times 3$. Training is performed till convergence using the corss-entropy loss and Adam optimizer with learning rate of $10^{-2}$. We use a batch size of $32$ for Office-10 dataset and $64$ for DomainNet. For DomainNet, which originally contains $345$ categories, we restrict the label space to the top $10$ most frequent classes to reduce complexity, The selected categories are: \texttt{bird}, \texttt{feather}, \texttt{headphones}, \texttt{icecream}, \texttt{teapot}, \texttt{tiger}, \texttt{whale}, \texttt{windmill}, \texttt{wineglass}, \texttt{zebra}. For Office-10, each client get one of the $4$ domains and For DomainNet dataset, each client get one of the $6$ domains. The distribution of each client training data are showing in Fig.\ref{fig:DomainNet_feature} and Fig.\ref{fig:Office_feature}.

\begin{figure}[H]
\begin{minipage}[H]{0.48\textwidth}
    \centering
    \includegraphics[width=\linewidth]{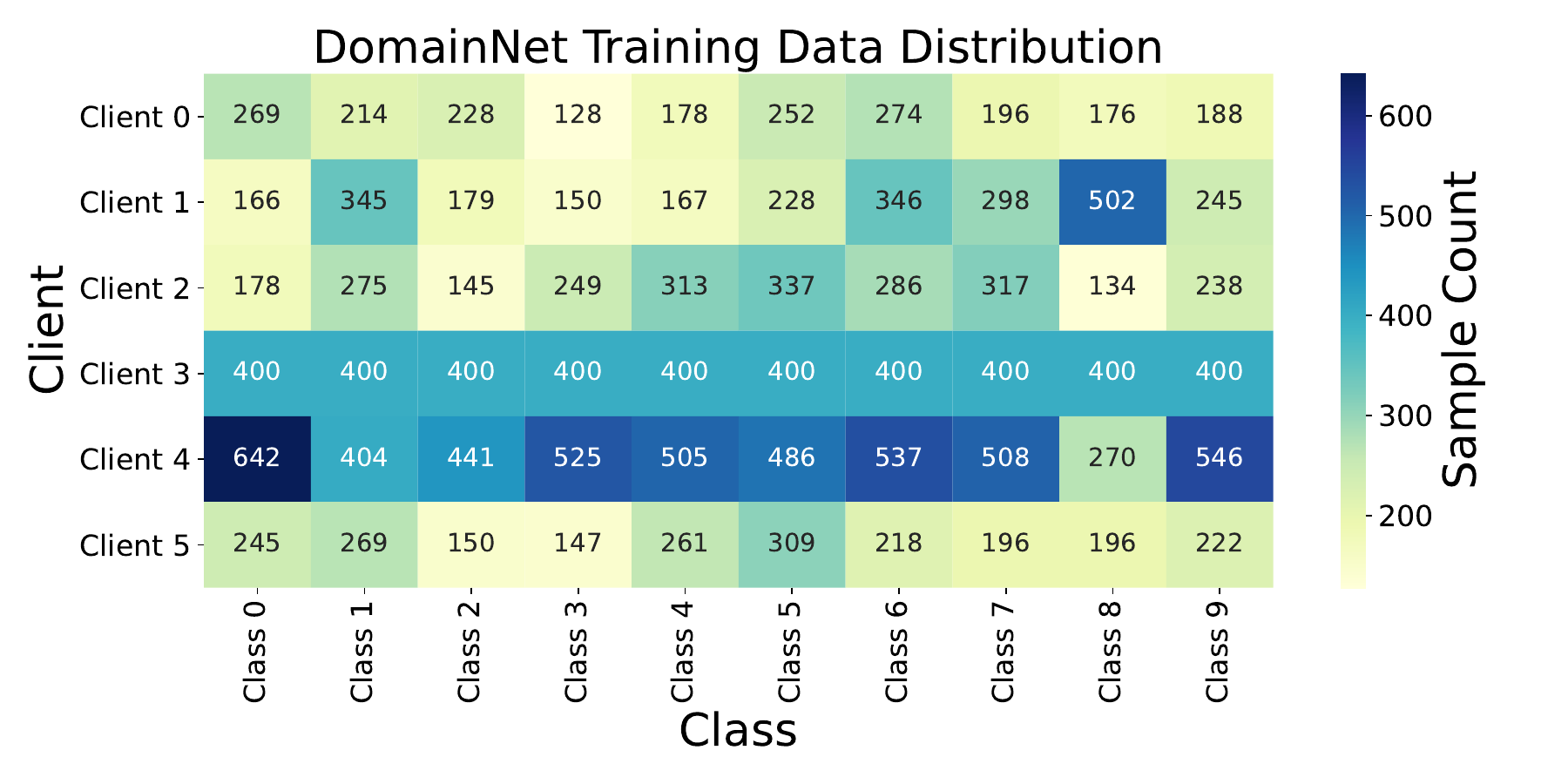}
    \caption{DomainNet clients data distribution.}
    \label{fig:DomainNet_feature}
\end{minipage}\hfill
\begin{minipage}[H]{0.48\textwidth}
    \centering
    \includegraphics[width=\linewidth]{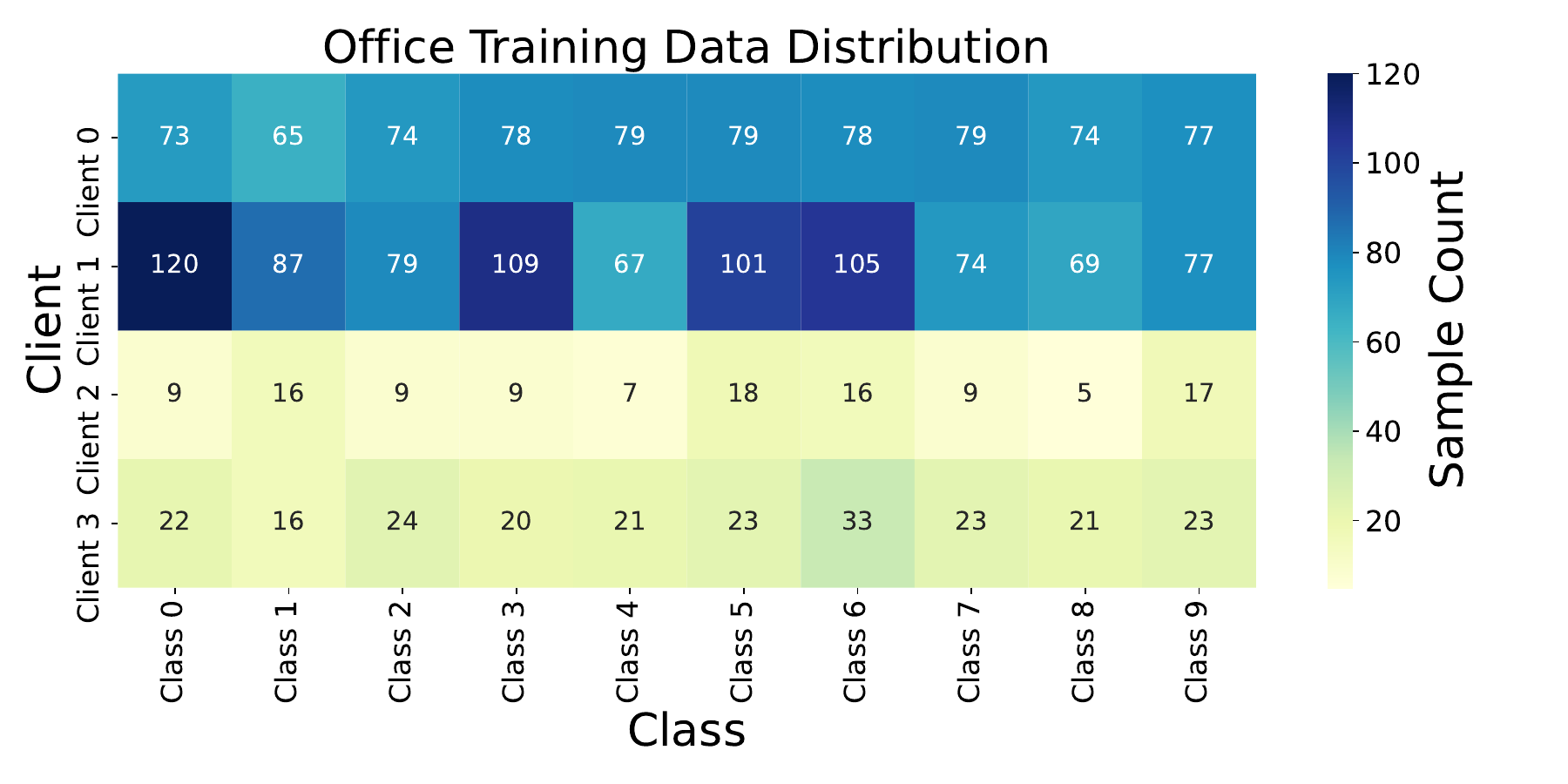}
    \caption{Office-10 clients data distribution.}
    \label{fig:Office_feature}
\end{minipage}
\end{figure}

\paragraph{Hybrid Distribution}
We simulate the hybrid data distribution by combining both label distribution skew and feature distribution shift. We use the same two datasets as in feature shift experiments: Office-10 and DomainNet. To introduce label skew, for each domain, we randomly sample $5$ clients and assign to each client only $2$ out of $10$ total classes. This results in $20$ clients for the Office-Caltech10 dataset ($4$ domains $\times$ $5$ clients) and $30$ clients for DomainNet ($6$ domains $\times$ $5$ clients). This creates a hybrid non-IID setting where clients differ significantly in both input distribution and output distribution. We use the same preprocessing and training configurations as the feature shift experiments. All input images are resized to $256\times 256\times3$ before being fed into $AlexNet$. Models are trained using cross-entropy loss and Adam optimizer with learning rate of $10^{-2}$. The batch size is set to $32$ for Office-10 and $64$ for DomainNet. For DomainNet, we selected the $10$ most frequent as feature shift experiments.To effectively visualize the distribution of local training data across $30$ clients, we used a dot matrix plot, which offers a compact and intuitive representation of client-level variation. The visualization of the Clients distribution of DomainNet and Office-10 datasets are shown in Fig.\ref{fig:DomainNet_mix} and Fig.\ref{fig:Office_mix}

\begin{figure}[H]
\begin{minipage}[H]{0.48\textwidth}
    \centering
    \includegraphics[width=\linewidth]{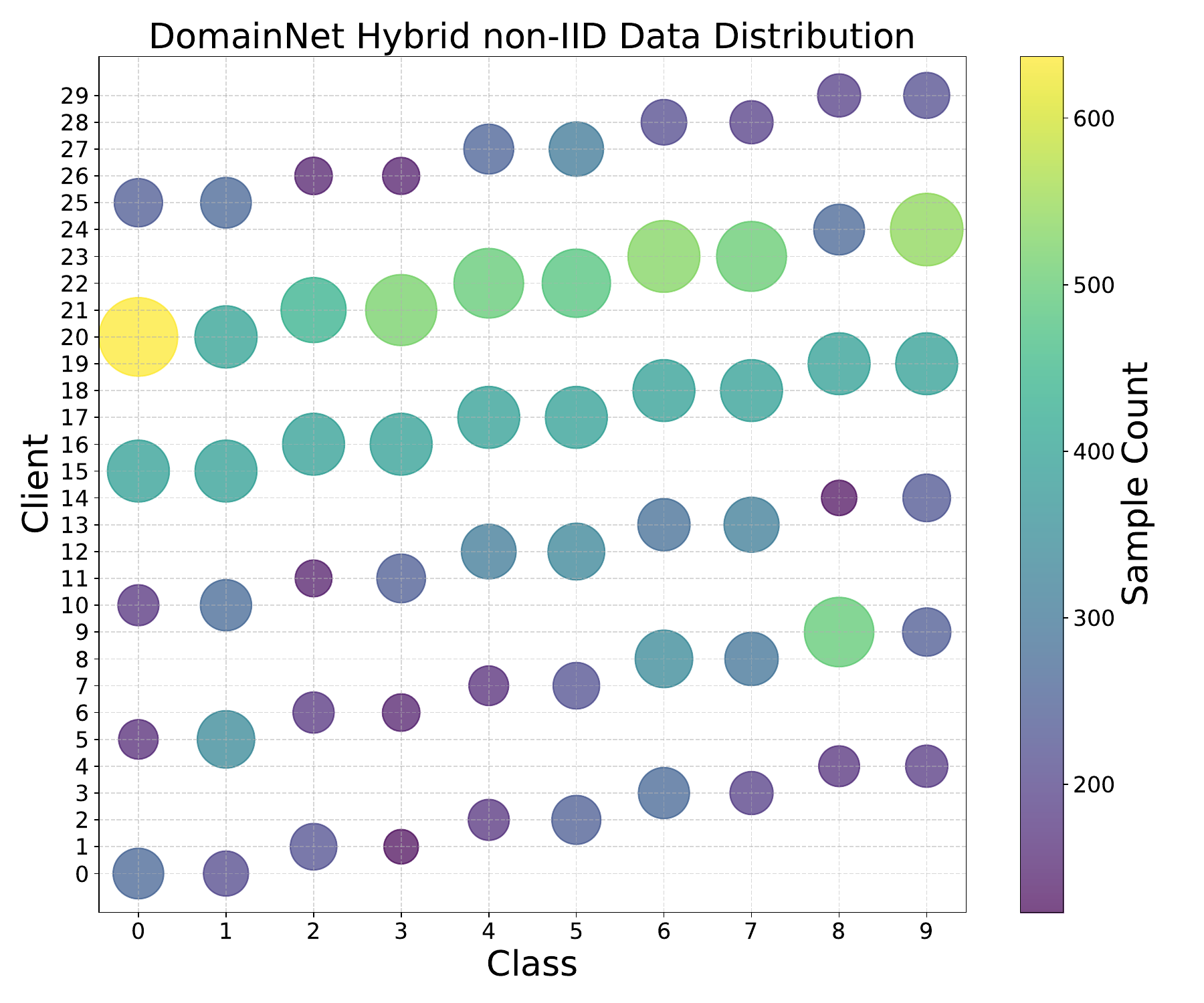}
    \caption{DomainNet clients Hybrid data distribution.}
    \label{fig:DomainNet_mix}
\end{minipage}\hfill
\begin{minipage}[H]{0.48\textwidth}
    \centering
    \includegraphics[width=\linewidth]{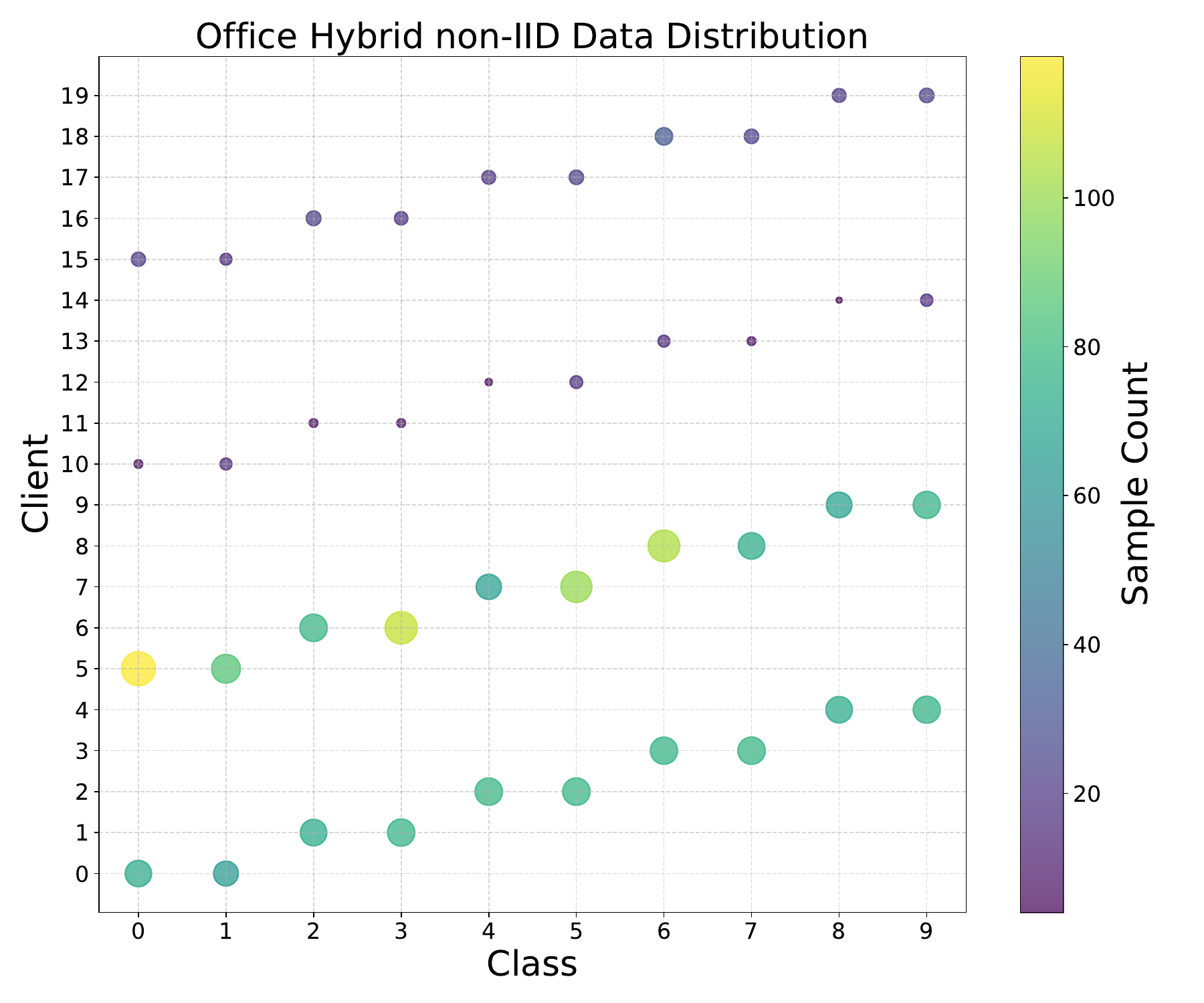}
    \caption{Office-10 clients Hybrid data distribution.}
    \label{fig:Office_mix}
\end{minipage}
\end{figure}

\section{Practical Impact of \pfedct}
\label{app:practical-impact}

\pfedct addresses data heterogeneity in personalized federated learning (PFL) via a fine-grained collaboration mechanism that lets each client selectively rely on collective expertise, aiming to improve accuracy and robustness. This is particularly relevant in domains with substantial variability (e.g., healthcare, finance, recommendation), where traditional federated methods can struggle. Empirically, \pfedct often outperforms strong PFL baselines and, in our evaluated settings, local and centralized training across label skew, feature shift, and hybrid heterogeneity; where margins are small, it performs comparably. Its design limits disclosure by sharing only hard predictions on a shared unlabeled dataset, reducing potential privacy leakage relative to parameter sharing. This follows “share as little as possible”~\citep{mian2023nothing, tan2022fedproto} and aligns with privacy-by-design~\citep{cavoukian2009privacy}. In addition, our differentially private variant (DP-\pfedct) illustrates how to obtain formal $(\varepsilon,\delta)$-DP guarantees for the released signals (labels and expertise), with the privacy accounting provided and empirical calibration left to future work. Finally, federated co-training is communication-efficient for large models: when parameter counts vastly exceed $|U|$, sending hard labels (and one expertise scalar per example) can reduce uplink by orders of magnitude. Combining this with communication-efficient protocols~\citep{kamp2016communication, kamp2019black} has the potential to reduce communication by several orders of magnitude, in particular for large transformer-based models, such as LLMs.

\section{Notation}

\centerline{\bf Federated Learning Setup}
\bgroup
\def\arraystretch{1.5}
\begin{tabular}{p{1.25in}p{3.25in}}
$\displaystyle m$ & Number of participating clients \\
$\displaystyle i \in [m]$ & Index of a client \\
$\displaystyle D_i$ & Private dataset of client $i$ \\
$\displaystyle U$ & Shared public unlabeled dataset used for co-training \\
$\displaystyle T$ & Total number of communication rounds \\
$\displaystyle b$ & Communication period (local steps between rounds) \\
$\displaystyle A_i$ & Local learning algorithm used by client $i$ \\
\end{tabular}
\egroup
\vspace{0.25cm}

\centerline{\bf Models and Predictions}
\bgroup
\def\arraystretch{1.5}
\begin{tabular}{p{1.25in}p{3.25in}}
$\displaystyle h_i^t$ & Local model of client $i$ at round $t$ \\
$\displaystyle L(h, D)$ & Loss of model $h$ on dataset $D$ \\
$\displaystyle \ell_{\text{priv}} = L(h_i^{t-1}, D_i)$ & Private loss on client $i$'s local data \\
$\displaystyle \ell_{\text{pseudo}} = L(h_i^{t-1}, P^t)$ & Loss on pseudo-labeled public data $P^t$ \\
$\displaystyle L_i^t \in \{0,1\}^{|U|\times C}$ & One-hot prediction matrix from client $i$ on public data \\
$\displaystyle E_i^t \in (0,\infty)^{|U|}$ & Confidence (expertise) vector from client $i$ on public data \\
$\displaystyle S^t = \sum_{i=1}^m \text{diag}(E_i^t) \cdot L_i^t$ & Weighted score matrix used for consensus aggregation \\
$\displaystyle L^t[j] = \arg\max_{c\in[C]} S^t[j,c]$ & Consensus pseudo-label for public example $x_j \in U$ \\
\end{tabular}
\vspace{0.25cm}

\centerline{\bf Adaptive Weighting Mechanism}
\bgroup
\def\arraystretch{1.5}
\begin{tabular}{p{1.25in}p{3.25in}}
$\displaystyle \lambda_i^t$ & Adaptive weight controlling trust in global signal for client $i$ at round $t$ \\
$\displaystyle \ell = \ell_{\text{priv}} + \lambda_i^t \cdot \ell_{\text{pseudo}}$ & Total loss used for local model update at round $t$ \\
\end{tabular}
\egroup
\vspace{0.25cm}

\centerline{\bf Optimization and Convergence}
\bgroup
\def\arraystretch{1.5}
\begin{tabular}{p{1.25in}p{3.25in}}
$\displaystyle \theta$ & Model parameters \\
$\displaystyle \nabla L(\theta)$ & Gradient of loss with respect to model parameters \\
$\displaystyle \sigma^2$ & Bounded variance of local gradient estimator \\
$\displaystyle \tilde{\sigma}^2$ & Bounded variance of global gradient estimator (pseudo-label noise) \\
$\displaystyle \delta$ & Bounded drift in local objectives across rounds \\
$\displaystyle L$ & Smoothness constant (Lipschitz constant of the gradient) \\
\end{tabular}
\egroup
\vspace{0.25cm}

\centerline{\bf Sets and Indexing}
\bgroup
\def\arraystretch{1.5}
\begin{tabular}{p{1.25in}p{3.25in}}
$\displaystyle [m] = \{1,\dots,m\}$ & Index set of all clients \\
$\displaystyle [C] = \{1,\dots,C\}$ & Index set of all classes \\
$\displaystyle x_j \in U$ & $j$-th public unlabeled sample \\
$\displaystyle y_j$ & True (unknown) label of public sample $x_j$ \\
$\displaystyle |U|$ & Number of samples in the public dataset $U$ \\
$\displaystyle |D_i|$ & Number of samples in the local dataset of client $i$ \\
$\displaystyle L_i^t[j,c]$ & $(j,c)$-th entry of prediction matrix $L_i^t$ \\
$\displaystyle E_i^t[j]$ & Confidence of client $i$ on public example $x_j$ \\
\end{tabular}
\egroup

\end{document}